\documentclass{article}

\usepackage{subcaption}
\usepackage[colorlinks=true]{hyperref}
\usepackage[accepted]{assets/template/icml2026/icml2026}

\usepackage{microtype}
\usepackage{graphicx}
\usepackage{booktabs} 

\usepackage[utf8]{inputenc} 
\usepackage{url}            
\usepackage{amsfonts}       
\usepackage{nicefrac}       
\usepackage{xcolor}         

\usepackage{multirow}
\usepackage{algorithm}
\usepackage{algorithmic}
\usepackage{amsmath}
\usepackage{amssymb}
\usepackage{mathtools}
\usepackage{amsthm}
\usepackage[textsize=tiny]{todonotes}
\usepackage{xparse}
\usepackage{physics}
\usepackage{soul}
\usepackage{tcolorbox}

\usepackage{newfloat}
\usepackage{listings}

\usepackage{thmtools}
\usepackage[capitalize,noabbrev]{cleveref}

\graphicspath{ {assets/figures/} }
\DeclareGraphicsExtensions{.png, .pdf}

\crefname{thm}{Thm.}{Thms.}
\crefname{cor}{Cor.}{Cors.}
\crefname{prop}{Prop.}{Props.}
\crefname{asm}{Asm.}{Asms.}
\crefname{defn}{Defn.}{Defns.}

\declaretheorem[name=Remark,style=remark,numberwithin=section]{remark}

\definecolor{g1}{HTML}{7be495}
\definecolor{b1}{HTML}{1eb0bb}

\tcbuselibrary{breakable, listings} 
\newtcolorbox[
    auto counter, 
    number within=section, 
    list type=section, 
    list inside=toc]{questionbox}[1]{
        colback=white, 
        colframe=b1, 
        colbacktitle=b1, 
        coltitle=white, 
        fonttitle=\bfseries, 
        title={Issue \thetcbcounter}, 
        list entry={Issue \thetcbcounter\quad}, 
        breakable, 
        before upper={\parindent10pt\noindent},  
        left = 1 mm, 
        right = 1 mm, 
        top = 1 mm, 
        bottom = 1 mm,
        right skip = 0.10\textwidth,
        enhanced,drop fuzzy shadow,  
}

\NewDocumentCommand \neighbor  { D[]{} m }{\mathcal{N}_{#1}({#2})}
\NewDocumentCommand \aggregator{ D[]{} }{T_{#1}}
\NewDocumentCommand \updator   { D[]{} }{U_{#1}}

\NewDocumentCommand \border    {  }{\mathcal{B}}

\NewDocumentCommand \param     {  }{\boldsymbol{\theta}}
\NewDocumentCommand \att       {  }{\mathcal{A}}
\NewDocumentCommand \loss      {  }{\mathcal{L}}

\NewDocumentCommand \fig       { D[]{ht} m }{    
    \ifaddfigure                                 
        \begin{figure}[#1]                       
            \centering                           
            {#2}                                 
        \end{figure}                             
    \fi                                          
}                                                
\NewDocumentCommand \widefig   { D[]{ht} m m m }{
    \ifaddfigure
        \begin{figure*}[#1]
            \centering
            {#3}
            \caption{#2}
            {#4}
        \end{figure*}
    \fi
}
\NewDocumentCommand \tab       { D[]{ht} m m }{  
    \begin{table}[#1]                            
        \caption{#2}                             
        \centering                               
        {#3}                                     
    \end{table}                                  
}                                                
\NewDocumentCommand \widetab   { D[]{ht} m m }{
    \begin{table*}[#1]
        \caption{#2} 
        \centering
        {#2}
    \end{table*}
}
\NewDocumentCommand \alg       { D[]{ht} m m }{
    \SetAlgoCaptionLayout{capleft}
    \setlength{\algomargin}{1.2em}  
    \SetAlgoCaptionSeparator{ }
    \begin{algorithm}[{#1}]
        \DontPrintSemicolon
        \SetKwInOut{Input}{\bfseries Input}  
        \SetKwInOut{Output}{\bfseries Output}  
        \caption{{#2}}
        {#3}  
    \end{algorithm}
}

\NewDocumentCommand \hlt       { D[]{blue} m }{\sethlcolor{#1} \hl{#2}}

\begin{document}

\twocolumn[
  \icmltitle{\texorpdfstring{Boundary Embedding Shaping with Adaptive Contrastive Learning for \\ Graph Structural Disentanglement}{Boundary Embedding Shaping with Adaptive Contrastive Learning for Graph Structural Disentanglement}}
\icmltitlerunning{Boundary Embedding Shaping with Adaptive Contrastive Learning for Graph Structural Disentanglement}

  \icmlsetsymbol{equal}{*}

\begin{icmlauthorlist}
\icmlauthor{Jiaqing~Chen}{equal,ynnu}
\icmlauthor{Zidu~Yin}{equal,ynnu}
\icmlauthor{Yichao~Cai}{adu}
\icmlauthor{Yuhang~Liu}{adu}
\icmlauthor{Zhen~Zhang}{adu}
\icmlauthor{Dong~Gong}{nswu}
\icmlauthor{Javen~Qinfeng~Shi}{adu}
\end{icmlauthorlist}

\icmlaffiliation{ynnu}{School of Information Science and Technology, Yunnan Normal University, Kunming, China}
\icmlaffiliation{adu}{Australian Institute for Machine Learning, Adelaide University, Adelaide, Australia}
\icmlaffiliation{nswu}{School of Computer Science and Engineering, The University of New South Wales, Sydney, Australia}

\icmlcorrespondingauthor{Zidu~Yin}{zidu.yin@ynnu.edu.cn}

  \icmlkeywords{Graph Representation Learning, Graph Structural Disentanglement, Boundary Embedding Shaping, Contrastive Learning}

  \vskip 0.3in
]

\printAffiliationsAndNotice{\icmlEqualContribution}

\begin{abstract}

Graph neural networks (GNNs) excel at aggregating neighbor information for classification, yet their performance is hindered by graph structural entanglement, where spurious correlations from semantically irrelevant neighbors contaminate node embeddings. 
This challenge is most acute for nodes near class boundaries in the embedding space, where amplified structural noise blurs decision boundaries and destabilizes predictions.
Existing robust GNN methods largely treat all nodes uniformly, ignoring boundary vulnerabilities.
In this paper, to improve classification performance, we tackle graph structural disentanglement by identifying boundary-region entanglement as the primary bottleneck and propose \textbf{B}oundary \textbf{E}mbedding \textbf{S}haping (BES)\footnote{Code is available at \url{https://github.com/coodest/BES}.}, an adaptive contrastive learning GNN plug-in module that selectively suppresses spurious structural noise at decision boundaries with minimal model parameter perturbation.
Extensive experiments demonstrate that BES consistently improves boundary discrimination and outperforms existing leading methods.
Notably, BES boosts GCN performance by an average of 3.3\% in node classification (up to 5.0\% on WikiCS) and achieves superior accuracy in link prediction.

\end{abstract}

\begin{figure*}[ht]
    \centering
    \includegraphics[width=0.9\linewidth]{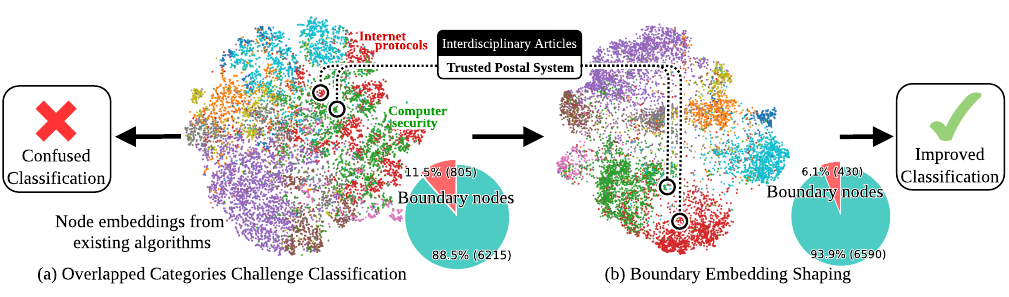}
    \caption{Boundary embedding shaping on WikiCS. BES structurally disentangles boundary nodes between clusters, producing sharply separable embeddings that significantly improve classification performance.}
    \label{fig:example}
\end{figure*}

\section{Introduction}\label{Introduction}

Graph data, characterized by intricate topologies and rich semantic interdependencies, underpins diverse applications spanning social networks \cite{lyu2024enhancing,DBLP:conf/icml/SubramonianSS24}, biological systems \cite{zhang2025graph}, computational chemistry \cite{DBLP:journals/corr/abs-2410-11323}, and recommendation systems \cite{transgnn2024,wu2024gnn_recsys}.
Graph neural networks (GNNs) have emerged as the dominant paradigm for learning on such data by aggregating information from local neighborhoods to construct node embeddings.
However, unlike grid-structured modalities such as images \cite{deng2012mnist, deng2009imagenet}, graphs exhibit irregular topologies where neighborhood structures vary significantly across nodes, introducing noise and irrelevant information into node embeddings, particularly in classification. 
This leads to spurious correlations, a phenomenon referred to as graph structural entanglement \cite{DBLP:conf/nips/ZhuYZHAK20,DBLP:journals/corr/abs-2303-10993}, which blurs the decision boundaries between categories and renders classification notably more challenging.

Consider computer science literature classification from Wikipedia \cite{mernyei2020wiki} (\Cref{fig:example} (a)), where articles on \textit{Internet protocols} and \textit{computer security} exhibit substantial thematic overlap due to their inherent interconnection, i.e., protocols ensure delivery while security guarantees confidentiality. Distinguishing primary categories becomes challenging as articles often discuss techniques common to both domains.

In this work, we improve graph classification performance through graph structural disentanglement. 
We investigate how spurious structural noise from entangled neighborhoods degrades accuracy at decision boundaries and propose to refine boundary embeddings where structural disentanglement yields maximum classification gains.
To this end, we propose \textbf{B}oundary \textbf{E}mbedding \textbf{S}haping (BES), an end-to-end framework that immunizes GNNs against spurious correlations from entangled neighbors by employing boundary-focused contrastive learning \cite{DBLP:conf/nips/YouCSCWS20,Zhu_GCA} with minimal model parameter perturbations.

Specifically, BES builds upon existing GNN encoders to obtain comprehensive initial representations.
We then employ hard example mining \cite{shrivastava2016training,xia2022progcl} to identify boundary nodes, namely nodes predominantly surrounded by neighbors from different classes.
Subsequently, we selectively improve only those nodes whose embeddings can be separated with minimal parameter updates \cite{koh2017understanding}, allowing the model to sharpen boundaries without affecting non-boundary nodes.
Finally, a GNN-based decoder performs classification using the improved embeddings.
As shown in \Cref{fig:example}(b), BES effectively filters spurious structural noise, producing sharply improved decision boundaries for robust classification.

In summary, our contributions are as follows:
\begin{itemize}
\item We identify spurious structural noise from entangled neighborhoods as a fundamental bottleneck in graph classification and provide theoretical analysis demonstrating how boundary-focused contrastive learning provably disentangles invariant factors from spurious correlations.

\item We propose Boundary Embedding Shaping (BES), a principled framework that combines hard example mining with adaptive contrastive learning to selectively suppress structural noise at decision boundaries while preserving model stability through controlled parameter updates.

\item Extensive experiments on node classification and link prediction benchmarks show that BES consistently outperforms existing leading methods, improving GCN accuracy by up to 5.0\% on WikiCS, with observable gains from boundary nodes, validating our theoretical insights.
\end{itemize}

\section{Preliminaries}
\textbf{Graph Notation.} A graph $\mathcal{G} = (\mathcal{V}, \mathcal{E}, \mathbf{X})$ consists of a node set $\mathcal{V}$ with $N = |\mathcal{V}|$ nodes, an edge set $\mathcal{E} \subseteq \mathcal{V} \times \mathcal{V}$, and node feature matrix $\mathbf{X} \in \mathbb{R}^{N \times D}$, where $D$ denotes the feature dimensionality.

\textbf{Graph Encoder.} A graph encoder $\Phi$ learns node embeddings $\hat{\mathbf{z}}$ by aggregating neighbor information from $\mathbf{X}$ through message passing. 
However, this aggregation introduces structural entanglement~\cite{DBLP:conf/nips/ZhuYZHAK20,DBLP:journals/corr/abs-2303-10993}, where spurious correlations from irrelevant neighbors contaminate node embeddings and blur decision boundaries, particularly for nodes near class boundaries.

\textbf{Decoder for Prediction.} For downstream tasks, a decoder $\Psi$ maps learned node embeddings to predictions. For node classification, given the embedding $\hat{\mathbf{z}}_i \in \hat{\mathbf{z}}$ of node $v_i \in \mathcal{V}$, the predicted label is:
\begin{equation}
\hat{y}_i = \Psi \left( \hat{\mathbf{z}}_i \right).
\end{equation}
For edge classification, given embeddings of nodes $v_i$ and $v_j$ connected by an edge $e_{ij} \in \mathcal{E}$, the predicted edge label is:
\begin{equation}
\hat{y}_{ij} = \Psi \left( \hat{\mathbf{z}}_i, \hat{\mathbf{z}}_j \right).
\end{equation}

\section{Structural Disentanglement Theory}\label{sec:Structural Disentanglement Theory}

Our approach improves classification performance through principled graph structural disentanglement. To formalize this, we first introduce a latent variable model that captures the underlying latent data generation process, and present the corresponding technical assumptions in \Cref{sec:lvm}.
This foundation enables a more principled approach to contrastive embedding refinement. Subsequently, we establish theoretical identifiability results in \Cref{sec:identifiability}, with properties in \Cref{sec:properties}.

\subsection{Latent Variable Model for the Data Generative Process}
\label{sec:lvm}
Given the individual observational graph node feature variable \(\mathbf{x} \in \mathcal{X}\), where \(\mathcal{X} \subseteq \mathbb{R}^D \) denotes the data manifold of the observations, our goal is to extract class-determining information that remains invariant across diverse local neighboring structures. To facilitate theoretical analysis, we model the underlying data generation process using a latent variable model (LVM) \cite{bishop1998latent}.

Formally, let $\mathbf{z} \in \mathcal{Z} \subseteq \mathbb{R}^n$ be a continuous random variable within an open and simply connected \emph{latent space} $\mathcal{Z}$. We decompose the latent space \(\mathcal{Z}\) into two simply connected open subspaces, denoted as \(\mathcal{Z} = \mathcal{Z}_{inv} \times \mathcal{Z}_{var}\). The latent variables in \(\mathcal{Z}\) are represented as \(\mathbf{z} = (\mathbf{z}_{inv}, \mathbf{z}_{var})\), where:
\begin{itemize}
    \item \(\mathbf{z}_{inv} \in \mathcal{Z}_{inv}\subseteq \mathbb{R}^{n_{inv}}\) denotes the \emph{invariant latent variables}, capturing stable node information from relevant local neighboring structural properties (e.g., representative categorical characteristics) that remain consistent across different graph observations of the same class, thereby determining the class label \(y\).
    \item \(\mathbf{z}_{var} \in \mathcal{Z}_{var}\subseteq \mathbb{R}^{n-n_{inv}}\) denotes the \emph{variant latent variables}, capturing dynamic relationships (e.g., temporary connectivity with outside current category) that vary across local neighboring structures, which is the noise source.
\end{itemize}

We allow a statistical relationship between \(\mathbf{z}_{inv}\) and \(\mathbf{z}_{var}\) without explicitly modeling the causal relationship, except in the case where \(\mathbf{z}_{var} \to \mathbf{z}_{inv}\). In this scenario, changes in the local neighboring structure affecting \(\mathbf{z}_{var}\) would inevitably propagate to \(\mathbf{z}_{inv}\), which contradicts the desired invariance.

Then, let $\mathbf{g}: \mathcal{Z} \to \mathcal{X}$ be a smooth and invertible mapping with a smooth inverse (i.e., a diffeomorphism) to the observation space $\mathcal{X} \subseteq \mathbb{R}^d$, and let $\mathbf{x}$ denote the continuous random variable representing the observed node features. The generative process for the dataset of observational features $\mathbf{x}$ is then given by:
\begin{equation}
    \mathbf{z} = (\mathbf{z}_{inv}, \mathbf{z}_{var})\sim p_{\mathbf{z}} , \quad
    y = g_y(\mathbf{z}_{inv}), \quad
    \mathbf{x} = \mathbf{g}(\mathbf{z}),
\end{equation}
where \(p_\mathbf{z}\) is a regular density that is positive almost everywhere over \(\mathcal{Z}\), and \(g_y\) is a deterministic function indicating that \(\mathbf{z}_{inv}\) exclusively determines the class label \(y\).

Next, we formalize the construction of positive pairs for contrastive learning. Similar to the previous setting in image data \cite{von2021self}, we connect the two nodes from different local sub-graphs in a positive pair, i.e., \(\mathbf{x} = \mathbf{g}(\mathbf{z})\) and \(\tilde{\mathbf{x}} = \mathbf{g}(\tilde{\mathbf{z}})\) where \(\tilde{\mathbf{z}} = (\tilde{\mathbf{z}}_{inv}, \tilde{\mathbf{z}}_{var})\), through a conditional distribution \(p_{\tilde{\mathbf{z}}|\mathbf{z}}\). Formally, we introduce the following two assumptions:
\begin{restatable}[Consistent Invariant Variables]{asm}{consistent}
\label{asm:consistent}    
Assume that the positive pair construction process is label-informed, i.e., the two nodes from different local sub-graphs \(\mathbf{x}\) and \(\tilde{\mathbf{x}}\) in a positive pair share the same class label \(y\). Then, the conditional density \(p_{\tilde{\mathbf{z}}|\mathbf{z}}\) over \(\mathcal{Z} \times \mathcal{Z}\) takes the form
\[
p_{\tilde{\mathbf{z}}|\mathbf{z}}(\tilde{\mathbf{z}}|\mathbf{z}) = \delta(\tilde{\mathbf{z}}_{inv}-\mathbf{z}_{inv}) \, p_{\tilde{\mathbf{z}}_{var}|\mathbf{z}_{var}}(\tilde{\mathbf{z}}_{var}|\mathbf{z}_{var})
\]
for some continuous conditional density \(p_{\tilde{\mathbf{z}}_{var}|\mathbf{z}_{var}}\) on \(\mathcal{Z}_{var} \times \mathcal{Z}_{var}\), where \(\delta(\cdot)\) is the Dirac delta function, indicating that \(\tilde{\mathbf{z}}_{inv} = \mathbf{z}_{inv}\) almost everywhere.
\end{restatable}

\begin{restatable}[Sufficient Changes in Variant Latent Variables]{asm}{changes}
\label{asm:changes}
The conditional density \(p_{\tilde{\mathbf{z}}_{var}|\mathbf{z}_{var}}\) stated in \Cref{asm:consistent} is smooth w.r.t. both \(\tilde{\mathbf{z}}_{var}\) and \(\mathbf{z}_{var}\). Moreover, for any \(\mathbf{z}_{var}\), the density satisfies \(p_{\tilde{\mathbf{z}}_{var}|\mathbf{z}_{var}}(\cdot|\mathbf{z}_{var}) > 0\) within some open, non-empty subspace \(\mathcal{O}(\mathbf{z}_{var}) \subseteq \mathcal{Z}_{var}\) containing \(\mathbf{z}_{var}\).
\end{restatable}

\begin{remark}
\Cref{asm:changes} requires only \emph{local} support of the conditional density around each conditioning node, which is considerably weaker than assuming global coverage over the entire variant latent space $\mathcal{Z}_{var}$. This formulation aligns with standard assumptions in contrastive learning identifiability theory~\cite{von2021self,zimmermann2021contrastive}. Crucially, this aligns perfectly with the inherent data characteristics of graph structures, especially in \emph{heterophilic graphs}. In such settings, positive sample pairs sharing the same core semantic class ($\mathbf{z}_{inv}$) frequently connect to diverse and structurally different local neighborhoods, causing variations across the graph topology ($\mathbf{z}_{var}$). Our method turns this heterophily characteristic into an advantage: it natively fulfills the \textit{sufficient variation} mandate by directly mining contrastive boundary pairs from data with high structural variation (detailed in \Cref{sec:Boundary Embedding Shaping}), rather than relying on explicit artificial data augmentation.
\end{remark}

These assumptions ensure that the positive pair construction preserves class consistency while allowing sufficient changes in the variant latent variables, thus facilitating robust contrastive learning.

\subsection{Identifiability Analysis}
\label{sec:identifiability}
Based on the proposed LVM in \Cref{sec:lvm}, our objective is to disentangle the block of invariant latent variables. 
To this end, we restate the definition of block-identifiability in \Cref{defn:blockid} to align with the notations used in our setting, as this formulation is widely adopted for theoretical analysis in contrastive learning \cite{von2021self, yao2024multi, cai2025on}:

\begin{restatable}[Block-Identifiability]{defn}{blockid}
\label{defn:blockid}
We say that the true invariant latent variables \(\mathbf{z}_{inv} = \mathbf{g}^{-1}(\mathbf{x})_{1:n_{inv}}\) are block-identifiable by a function \(\mathbf{f}:\mathcal{X}\to \mathcal{Z}\) if the inferred representations \(\hat{\mathbf{z}} = \mathbf{f}(\mathbf{x})\) contain all and only the information about \(\mathbf{z}_{inv}\). In other words, there exists an invertible function \(\mathbf{r}: \mathbb{R}^{n_{inv}}\to \mathbb{R}^{n_{inv}}\) between the true invariant latent variables and the learned representations, such that 
\(
\hat{\mathbf{z}} = \mathbf{r}(\mathbf{z}_{inv}).
\)
\end{restatable}

We then present the following identifiability result (a detailed proof can be found in \Cref{app:idproof}):

\begin{restatable}[Isolating Invariant Latent Variables]{thm}{idinv}
\label{thm:idinv}
Let \((\mathbf{x}, \tilde{\mathbf{x}})\) be a positive pair of observations generated according to the LVM formalized in \Cref{sec:lvm}, and suppose the positive pair construction process follows \Cref{asm:consistent,asm:changes}. Let \(\mathbf{f}: \mathcal{X} \to (0,1)^{n_{inv}}\) be a sufficiently flexible and smooth function, where \((0,1)^{n_{inv}}\) represents a Cartesian product of open intervals in \(n_{inv}\)-dimensional space. 
Then, minimizing the following contrastive loss function with the temperature parameter set to \(\tau = 1\) (similar to SimCLR~\cite{DBLP:conf/icml/ChenK0H20}) over a sufficient number of sample pairs \((\mathbf{x}, \tilde{\mathbf{x}})\) ensures that \(\mathbf{f}\) block-identifies \(\mathbf{z}_{inv}\):
\[
\mathcal{L} (\mathbf{z}, \tilde{\mathbf{z}}) = - \log 
\frac{
\exp\left( \mathrm{sim}^{pos}(\mathbf{z}, \tilde{\mathbf{z}}) / \tau \right)
}{
\sum_{\mathbf{z}^{\prime} \in Z, \mathbf{z}^{\prime} \neq \mathbf{z}}
\exp\left( \mathrm{sim}^{neg}(\mathbf{z}, \mathbf{z}^{\prime}) / \tau \right)
}
\]
where \(\mathrm{sim}(\cdot, \cdot)\) is a similarity function. Therefore, the learned node embeddings include all and only the information of the invariant latent variables \(\mathbf{z}_{inv}\).
\end{restatable}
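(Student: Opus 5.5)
The plan is to reduce the statement to the block-identifiability result of von Kügelgen et al.~\cite{von2021self} (Theorem 4.4 there), adapted to the label-informed positive pairs of \Cref{asm:consistent}.

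\textbf{Step 1: asymptotic form of the loss.} Following Wang and Isola, in the limit of infinitely many negatives with $\tau=1$, the normalized objective decomposes into an alignment term and an entropy term:
\[
\mathcal{L}_{\infty}(\mathbf{f}) = \mathbb{E}_{(\mathbf{x},\tilde{\mathbf{x}})}\big[\|\mathbf{f}(\mathbf{x})-\mathbf{f}(\tilde{\mathbf{x}})\|^2\big] - H\big(\mathbf{f}(\mathbf{x})\big).
\]
To get this form I would take the similarity to be negative squared Euclidean distance, or treat $\mathrm{sim}^{pos}$ and $\mathrm{sim}^{neg}$ in this way. The range of $\mathbf{f}$ is $(0,1)^{n_{inv}}$, so the entropy is maximized, at value $0$, by the uniform distribution.

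\textbf{Step 2: a global minimizer exists.} Set $\mathbf{f}^* = \mathbf{d}\circ(\mathbf{g}^{-1})_{1:n_{inv}}$. Here $\mathbf{d}$ is the Darmois construction: a sequence of conditional CDFs that maps $p_{\mathbf{z}_{inv}}$ to the uniform distribution on $(0,1)^{n_{inv}}$. It is smooth and invertible because $p_{\mathbf{z}}$ is a regular density that is positive a.e. By \Cref{asm:consistent}, $\tilde{\mathbf{z}}_{inv}=\mathbf{z}_{inv}$ a.s., so the alignment term vanishes while the entropy is maximal. Hence $\mathcal{L}_\infty(\mathbf{f}^*)=0$ is the global minimum, and any minimizer $\mathbf{f}$ must satisfy both conditions: $\mathbf{f}(\mathbf{x})=\mathbf{f}(\tilde{\mathbf{x}})$ a.s., and $\mathbf{f}(\mathbf{x})$ uniform.

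\textbf{Step 3: the minimizer depends only on $\mathbf{z}_{inv}$.} Write $\mathbf{h}=\mathbf{f}\circ\mathbf{g}$. Alignment gives $\mathbf{h}(\mathbf{z}_{inv},\mathbf{z}_{var})=\mathbf{h}(\mathbf{z}_{inv},\tilde{\mathbf{z}}_{var})$ for almost all pairs with $\tilde{\mathbf{z}}_{var}$ drawn from $p(\cdot|\mathbf{z}_{var})$. I would prove $\partial \mathbf{h}/\partial \mathbf{z}_{var}=0$ by contradiction. Suppose the derivative is nonzero at some point. By continuity there is an open set on which $\mathbf{h}$ varies in $\mathbf{z}_{var}$. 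By \Cref{asm:changes}, the conditional density is positive on an open neighbourhood $\mathcal{O}(\mathbf{z}_{var})$, and $p_{\mathbf{z}}>0$ a.e. Together these give positive probability to pairs on which $\mathbf{h}$ differs, so the alignment term is strictly positive, a contradiction. Because $\mathcal{Z}_{var}$ is simply connected (hence path connected), zero partial derivatives make $\mathbf{h}$ constant in $\mathbf{z}_{var}$. Therefore $\hat{\mathbf{z}}=\mathbf{r}(\mathbf{z}_{inv})$ for some smooth $\mathbf{r}$.

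\textbf{Step 4: invertibility of $\mathbf{r}$.} Now $\mathbf{r}$ pushes $p_{\mathbf{z}_{inv}}$ forward to the uniform distribution on $(0,1)^{n_{inv}}$. I would use the lemma of Zimmermann et al.\ and von Kügelgen et al.: a smooth map between simply connected open sets of the same dimension that transports a positive density to a positive density is a bijection. The proof goes through the change-of-variables formula, which forces a nonvanishing Jacobian determinant, then applies the theorem on proper local diffeomorphisms onto simply connected targets (Hadamard). This step is the main technical obstacle, particularly ruling out non-injective maps that still push forward to uniform. My first attempt would be to cite that lemma directly. Failing that, I would argue that non-injectivity on an open set strictly lowers differential entropy relative to the maximum, contradicting Step 2. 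Once $\mathbf{r}$ is invertible, \Cref{defn:blockid} is satisfied.

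A further point needs care: the theorem's loss has the positive term in the numerator but not in the denominator. Its limit still yields the alignment-minus-entropy objective up to constants, so I would note this explicitly.
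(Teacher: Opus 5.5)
Your proposal is correct and follows the paper's proof essentially step for step: the AlignMaxEnt limit of the loss, the Darmois minimizer $\mathbf{d}\circ(\mathbf{g}^{-1})_{1:n_{inv}}$ attaining the global minimum $0$, a contradiction via \Cref{asm:changes} showing $\mathbf{f}\circ\mathbf{g}$ cannot depend on $\mathbf{z}_{var}$, and a cited bijectivity lemma (the paper uses Lemma B.2 of \cite{cai2025on}) that makes $\mathbf{r}$ invertible. Your Step 3 is in fact tighter than the paper's, which applies an almost-sure identity at two specific realizations, whereas you use a positive-probability set of pairs. Drop your Step~4 fallback, however: differential entropy depends only on the law of $\mathbf{r}(\mathbf{z}_{inv})$, which Step~2 already forces to be uniform (the non-injective map $x\mapsto 2x \bmod 1$ also preserves uniformity), so ruling out non-injectivity genuinely requires the smoothness/regular-density Jacobian argument inside the cited lemma.
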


\paragraph{Discussion.} The invariant latent variables \(\mathbf{z}_{inv} \in \mathcal{Z}_{inv} \subseteq \mathbb{R}^{n_{inv}}\) represent stable structural properties (e.g., community structure or node rules) that remain consistent across graph observations of the same class, thereby determining the class label \(y\). 
The identifiability result stated above ensures that the learned representations effectively isolate and preserve the invariant latent components while discarding variant information, thereby enhancing classification performance.
The loss in \Cref{thm:idinv} is important to effectively filter the noise information from the local topology, which will be discussed in \Cref{sec:Boundary Embedding Shaping}.

\subsection{Properties of the Invariant Latent Variables Isolation}
\label{sec:properties}

To further characterize the quality of the learned node embeddings and guide our specific design of embedding shaping, we derive the following properties of the invariant latent variables isolation for structure-level disentanglement.

(1) \Cref{cor:disentanglement} justifies our objective of learning invariant features (a detailed proof can be found in \Cref{app:Proofs_cor}).

\begin{restatable}[Structure-Level Disentanglement]{cor}{disentanglement}
\label{cor:disentanglement}
Minimizing the contrastive loss under \Cref{asm:consistent,asm:changes} maximizes the mutual information between the node embeddings $\hat{\mathbf{z}}$ and the invariant structure $\mathbf{z}_{inv}$, while minimizing information about the variant topology $\mathbf{z}_{var}$:
\[
    \max_{\mathbf{f}} I(\hat{\mathbf{z}}; \mathbf{z}_{inv}) \quad \text{s.t.} \quad I(\hat{\mathbf{z}}; \mathbf{z}_{var}) \to 0.
\]
This confirms that the optimal node embeddings achieve structure-level disentanglement, effectively filtering structural noise.
\end{restatable}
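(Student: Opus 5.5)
The plan is to derive the corollary directly from \Cref{thm:idinv}, turning its block-identifiability conclusion into the two mutual-information statements. \Cref{thm:idinv} gives that any minimizer $\mathbf{f}^*$ of the contrastive loss under \Cref{asm:consistent,asm:changes} satisfies $\hat{\mathbf{z}} = \mathbf{f}^*(\mathbf{x}) = \mathbf{r}(\mathbf{z}_{inv})$ for some invertible (and, by smoothness of $\mathbf{f}$ and $\mathbf{g}$, smooth) $\mathbf{r}$. I will read the optimization "$\max_{\mathbf{f}} I(\hat{\mathbf{z}};\mathbf{z}_{inv})$ s.t. $I(\hat{\mathbf{z}};\mathbf{z}_{var})\to 0$" in this sense: the loss minimizer attains the largest achievable information about $\mathbf{z}_{inv}$ among all encoders, and it carries no information about $\mathbf{z}_{var}$ beyond what is forced through $\mathbf{z}_{inv}$.

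\textbf{Maximality.} Since $\hat{\mathbf{z}}$ is any function of $\mathbf{x} = \mathbf{g}(\mathbf{z})$, the data-processing inequality gives $I(\hat{\mathbf{z}};\mathbf{z}_{inv}) \le I(\mathbf{x};\mathbf{z}_{inv})$ for every encoder $\mathbf{f}$. Because $\mathbf{g}$ is a diffeomorphism, this upper bound equals $I(\mathbf{z};\mathbf{z}_{inv})$. For the minimizer, invertibility of $\mathbf{r}$ gives $I(\mathbf{r}(\mathbf{z}_{inv});\mathbf{z}_{inv}) = I(\mathbf{z}_{inv};\mathbf{z}_{inv})$, so the bound is attained. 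For continuous variables these quantities are infinite, so I would argue via invariance of mutual information under invertible maps and compare any encoder to $\mathbf{f}^*$: $\mathbf{z}_{inv}$ is recoverable from $\hat{\mathbf{z}}$, so $\hat{\mathbf{z}}$ is a sufficient statistic and no encoder can do better. A cleaner alternative is to state the result for discretized or noise-perturbed versions, where the quantities are finite.

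\textbf{Vanishing variant information.} Because $\hat{\mathbf{z}} = \mathbf{r}(\mathbf{z}_{inv})$ is a deterministic function of $\mathbf{z}_{inv}$, we have the Markov chain $\mathbf{z}_{var} \to \mathbf{z}_{inv} \to \hat{\mathbf{z}}$. Hence
\[
I(\hat{\mathbf{z}};\mathbf{z}_{var}\mid \mathbf{z}_{inv}) = 0 .
\]
This is the rigorous content of the constraint: the embedding contains no information about the variant topology beyond $\mathbf{z}_{inv}$.

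The main obstacle is that the LVM in \Cref{sec:lvm} explicitly allows statistical dependence between $\mathbf{z}_{inv}$ and $\mathbf{z}_{var}$. The unconditional quantity $I(\hat{\mathbf{z}};\mathbf{z}_{var}) = I(\mathbf{z}_{inv};\mathbf{z}_{var})$ therefore need not vanish, and the literal statement "$I(\hat{\mathbf{z}};\mathbf{z}_{var})\to 0$" is false in general. I would handle this in one of two ways. The first is to interpret the constraint as the conditional statement above, namely that the information about $\mathbf{z}_{var}$ is minimal: $I(\hat{\mathbf{z}};\mathbf{z}_{var}) = I(\mathbf{z}_{inv};\mathbf{z}_{var})$, which is the smallest value achievable by any encoder attaining maximal $I(\hat{\mathbf{z}};\mathbf{z}_{inv})$. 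The second is to add the assumption $\mathbf{z}_{inv}\perp \mathbf{z}_{var}$, under which the unconditional quantity is exactly zero.

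To justify the minimality claim in the first option, take any $\hat{\mathbf{z}}'$ with $I(\hat{\mathbf{z}}';\mathbf{z}_{inv})$ maximal. By the chain rule,
\[
I(\hat{\mathbf{z}}';\mathbf{z}_{var}) \ge I(\hat{\mathbf{z}}';\mathbf{z}_{var}) - I(\hat{\mathbf{z}}';\mathbf{z}_{var}\mid \mathbf{z}_{inv}) = I(\hat{\mathbf{z}}';\mathbf{z}_{inv}) - I(\hat{\mathbf{z}}';\mathbf{z}_{inv}\mid\mathbf{z}_{var}).
\]
I would then bound the right-hand side below by $I(\mathbf{z}_{inv};\mathbf{z}_{var})$, using the fact that $\mathbf{z}_{inv}$ is a function of $\hat{\mathbf{z}}'$ in the maximal case. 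I would also note that the limit "$\to 0$" reflects minimization over a sufficient number of sample pairs, as in \Cref{thm:idinv}, so that the empirical minimizer approaches $\mathbf{f}^*$.
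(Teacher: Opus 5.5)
Your skeleton matches the paper's: invoke \Cref{thm:idinv} to get $\hat{\mathbf{z}}=\mathbf{r}(\mathbf{z}_{inv})$ with $\mathbf{r}$ invertible, then read off the two information statements. Your objection to the second statement is correct, and it pinpoints an actual error in the paper's proof. The paper first asserts $\hat{\mathbf{z}}\perp\mathbf{z}_{var}\mid\mathbf{z}_{inv}$, which is true and equivalent to your $I(\hat{\mathbf{z}};\mathbf{z}_{var}\mid\mathbf{z}_{inv})=0$. It then writes
\[
I(\hat{\mathbf{z}};\mathbf{z}_{var}) = I(\hat{\mathbf{z}};\mathbf{z}_{var},\mathbf{z}_{inv}) - I(\hat{\mathbf{z}};\mathbf{z}_{inv}),
\]
and evaluates the right-hand side as $H(\mathbf{z}_{inv})-H(\mathbf{z}_{inv})=0$. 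The chain rule actually gives $I(\hat{\mathbf{z}};\mathbf{z}_{var},\mathbf{z}_{inv})=I(\hat{\mathbf{z}};\mathbf{z}_{inv})+I(\hat{\mathbf{z}};\mathbf{z}_{var}\mid\mathbf{z}_{inv})$. So that right-hand side is the conditional quantity, not $I(\hat{\mathbf{z}};\mathbf{z}_{var})$, and the paper has only established your conditional statement. At the minimizer the unconditional quantity equals $I(\mathbf{z}_{inv};\mathbf{z}_{var})$. This is positive whenever the two blocks are dependent, which \Cref{sec:lvm} explicitly permits. Your two repairs are the right ones: the conditional reading with the minimality property, or an added assumption $\mathbf{z}_{inv}\perp\mathbf{z}_{var}$. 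No large-sample limit can save the literal ``$\to 0$'', because the exact minimizer already sits at the fixed value $I(\mathbf{z}_{inv};\mathbf{z}_{var})$.

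For the first statement your route differs slightly. The paper writes $I(\hat{\mathbf{z}};\mathbf{z}_{inv})=H(\mathbf{z}_{inv})-H(\mathbf{z}_{inv}\mid\hat{\mathbf{z}})$ with $H(\mathbf{z}_{inv}\mid\hat{\mathbf{z}})=0$ and simply asserts this is maximal. Your data-processing bound $I(\hat{\mathbf{z}};\mathbf{z}_{inv})\le I(\mathbf{x};\mathbf{z}_{inv})$ actually justifies the $\max_{\mathbf{f}}$ over all encoders. Both are discrete-entropy arguments. For continuous $\mathbf{z}_{inv}$ the paper's step $H(\mathbf{z}_{inv}\mid\hat{\mathbf{z}})=0$ does not hold with differential entropy (it is $-\infty$ and the mutual information is infinite), so your sufficiency reformulation is the meaningful one.

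Three tightenings for your write-up:
\begin{enumerate}
\item ``Maximal $I(\hat{\mathbf{z}}';\mathbf{z}_{inv})$'' cannot define the competitor class in the continuous case, because it does not imply recoverability. For $n_{inv}\ge 2$, the single coordinate $z_{inv,1}$ already has infinite mutual information with $\mathbf{z}_{inv}$. Define the class instead as encoders from which $\mathbf{z}_{inv}$ is a measurable function.
\item With that definition the minimality claim is one line. Since $\mathbf{z}_{var}\to\hat{\mathbf{z}}'\to\mathbf{z}_{inv}$ is a Markov chain, data processing gives $I(\hat{\mathbf{z}}';\mathbf{z}_{var})\ge I(\mathbf{z}_{inv};\mathbf{z}_{var})$ directly. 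This avoids your detour through $I(\hat{\mathbf{z}}';\mathbf{z}_{inv})-I(\hat{\mathbf{z}}';\mathbf{z}_{inv}\mid\mathbf{z}_{var})$, which is $\infty-\infty$ for continuous variables.
\item Hedge your closing remark about the empirical minimizer approaching $\mathbf{f}^*$. A smooth deterministic near-minimizer whose dependence on $\mathbf{z}_{var}$ is nonzero on a set of positive measure has $I(\hat{\mathbf{z}};\mathbf{z}_{var}\mid\mathbf{z}_{inv})=\infty$. The conditional statement therefore holds at the exact minimizer, or in your discretized or noise-perturbed variant, not as a continuous limit.
\end{enumerate}
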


(2) \Cref{prop:bound} reveals that the disentanglement error is inversely proportional to the boundary margin (a detailed proof can be found in \Cref{app:Proofs_prop}).

\begin{restatable}[Disentanglement Error Bound]{prop}{bound}
\label{prop:bound}
Let $\epsilon = \mathbb{E}[\mathcal{L}]$ be the expected loss. The mean squared error between the learned representation $\hat{\mathbf{z}}$ and the true invariant factors (up to linear transformation $A$) is bounded by:
\[
    \inf_{A} \mathbb{E} [\| \hat{\mathbf{z}} - A \mathbf{z}_{inv} \|^2] \leq \mathcal{O}(\sqrt{\epsilon}) + \mathcal{O}\left(\frac{1}{\text{margin}(\partial \mathcal{Z}_{inv})}\right),
\]
where $\text{margin}(\partial \mathcal{Z}_{inv})$ denotes the separation margin at the class boundaries in the embedding space.
\end{restatable}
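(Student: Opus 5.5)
The bound splits the error into two parts. The first part measures how far the finite-loss encoder $\mathbf{f}$ is from the optimal encoder of \Cref{thm:idinv}. The second part is an irreducible term that comes from points near $\partial\mathcal{Z}_{inv}$. Because the statement is asymptotic and loosely specified, I will work with the usual decomposition of the InfoNCE loss. With $\tau=1$ and many negatives, $\mathcal{L}$ splits into an alignment term $\mathbb{E}\,\|\mathbf{f}(\mathbf{x})-\mathbf{f}(\tilde{\mathbf{x}})\|^2$ (for $\mathrm{sim}$ the negative squared distance) and a uniformity term, which is the log-partition or cross-entropy part. Let $\epsilon^\ast$ be the infimum of the loss, attained by the block-identifying $\mathbf{f}^\ast=\mathbf{r}\circ\mathbf{z}_{inv}$ of \Cref{thm:idinv}. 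I will measure suboptimality by the excess loss and, abusing notation as the statement does, call it $\epsilon$.

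\textbf{Step 1 (excess loss controls the variant leakage).} Under \Cref{asm:consistent}, $\tilde{\mathbf{z}}_{inv}=\mathbf{z}_{inv}$. Any dependence of $\mathbf{f}\circ\mathbf{g}$ on $\mathbf{z}_{var}$ therefore appears as misalignment across positive pairs. By \Cref{asm:changes}, $\tilde{\mathbf{z}}_{var}$ has positive density on an open neighbourhood $\mathcal{O}(\mathbf{z}_{var})$. A local Poincaré-type inequality on that neighbourhood then bounds the conditional variance $\mathbb{E}\,\|\mathbf{f}-\mathbb{E}[\mathbf{f}\mid\mathbf{z}_{inv}]\|^2$ by a constant times the alignment gap, and hence by $\mathcal{O}(\epsilon)$. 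The conditional expectation $h(\mathbf{z}_{inv}):=\mathbb{E}[\mathbf{f}\mid\mathbf{z}_{inv}]$ is then the candidate for the invariant part.

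\textbf{Step 2 (comparing $h$ to a linear map of $\mathbf{z}_{inv}$).} The uniformity term is a KL-type divergence between the law of $h(\mathbf{z}_{inv})$ and the uniform distribution on $(0,1)^{n_{inv}}$. Pinsker's inequality turns an excess of $\epsilon$ in this term into a total-variation, and then a Wasserstein, distance of order $\sqrt{\epsilon}$ from the optimal pushforward. This is the source of the $\mathcal{O}(\sqrt{\epsilon})$ term. I then take $A$ to be the $L^2$-projection of $h$ onto linear functions of $\mathbf{z}_{inv}$. On each class region, where $g_y$ is constant and $\mathbf{r}$ is smooth, the residual $\mathbf{r}-A\,\mathrm{id}$ is controlled by a local linearization.

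\textbf{Step 3 (boundary term), which is the main obstacle.} Near $\partial\mathcal{Z}_{inv}$, meaning the class boundaries, the Poincaré constant in Step 1 and the linearization in Step 2 both degrade. The neighbourhoods $\mathcal{O}(\mathbf{z}_{var})$ may mix pairs whose embeddings lie in a strip of width comparable to the margin. I would split the expectation into the set at distance at least $\gamma=\mathrm{margin}(\partial\mathcal{Z}_{inv})$ from the boundary, handled by Steps 1–2, and the boundary strip. On the strip, $\mathbf{f}$ is bounded since it maps into $(0,1)^{n_{inv}}$, so its contribution is at most the strip's probability mass times a bounded Lipschitz-type constant. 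Turning this into $\mathcal{O}(1/\gamma)$ requires an assumption I would state explicitly: that the worst-case local distortion of $\mathbf{r}$ near the boundary scales inversely with the margin, equivalently that a Lipschitz constant of order $1/\gamma$ is needed to separate classes at margin $\gamma$. Making this scaling rigorous from the assumptions already in the paper is the step I expect to fail or to need an added regularity hypothesis. I would try a density bound on $p_{\mathbf{z}}$ near $\partial\mathcal{Z}_{inv}$ first.

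Summing the interior contribution $\mathcal{O}(\epsilon)+\mathcal{O}(\sqrt{\epsilon})=\mathcal{O}(\sqrt{\epsilon})$ (since $\epsilon$ is small) and the strip contribution $\mathcal{O}(1/\gamma)$, then taking the infimum over $A$, gives the stated bound.
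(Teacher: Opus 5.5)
\textbf{The step that fails is Step 2, not Step 3.} Pinsker followed by a TV-to-Wasserstein bound controls only the \emph{law} of $\hat{\mathbf{z}}$, i.e.\ its distance to the uniform law on $(0,1)^{n_{inv}}$. But $\mathbb{E}\|\hat{\mathbf{z}}-A\mathbf{z}_{inv}\|^2$ depends on the \emph{coupling} between $\hat{\mathbf{z}}$ and $\mathbf{z}_{inv}$. Two random variables with identical laws can be far apart in $L^2$; for example, $U$ and $1-U$.

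Concretely, at zero excess loss the minimizers include every encoder $\mathbf{f}=\mathbf{r}\circ(\mathbf{g}^{-1})_{1:n_{inv}}$ with $\mathbf{r}$ smooth, invertible, and pushing $p_{\mathbf{z}_{inv}}$ forward to the uniform law. These are the Darmois map $\mathbf{d}$ from the proof of \Cref{thm:idinv} and, for $n_{inv}\ge 2$, $\mathbf{d}$ composed with any volume-preserving diffeomorphism of the cube. So there is no single "optimal pushforward" that $\hat{\mathbf{z}}$ must be $\mathcal{O}(\sqrt{\epsilon})$-close to in $L^2$. Moreover, for each such $\mathbf{r}$ the residual $\inf_A\mathbb{E}\|\mathbf{r}(\mathbf{z}_{inv})-A\mathbf{z}_{inv}\|^2$ is a strictly positive constant whenever $\mathbf{r}$ is nonlinear. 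This already happens for $n_{inv}=1$, with $\mathbf{r}$ the CDF of a non-uniform $z_{inv}$.

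Local linearization on class regions cannot remove this residual. It gives a different Jacobian on each region rather than one global $A$. Its error is set by the region's size and the curvature of $\mathbf{r}$, and neither is tied to $\epsilon$ or to the margin.

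The paper's proof takes a different route. It uses a quadratic-growth bound $\mathbb{E}\|\hat{\mathbf{z}}-\mathbf{z}^*\|^2\le 2\epsilon/\lambda_{\min}$ around $\mathbf{z}^*=A\mathbf{z}_{inv}$, together with a \emph{postulated} relation $\lambda_{\min}\ge c\cdot\text{margin}(\partial\mathcal{Z}_{inv})$. This gives $2\epsilon/(c\cdot\text{margin})$, which it then relaxes to the additive form. It rests on the same conflation, since it treats $A\mathbf{z}_{inv}$ as the loss minimizer. The extra hypothesis you anticipate in Step 3 plays the same role as its unproved curvature--margin link.

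\textbf{Two further points.} First, your split of the excess loss into nonnegative alignment and uniformity excesses is fine, but Step 1 needs more than the paper assumes. \Cref{asm:changes} only gives positivity of $p_{\tilde{\mathbf{z}}_{var}|\mathbf{z}_{var}}$ on some open set of unspecified size. That yields no controllable Poincar\'e constant unless you add a quantitative hypothesis, such as a spectral gap or a uniform density lower bound on balls of fixed radius.

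Second, Step 3 is not where the difficulty lies. Because $\hat{\mathbf{z}}\in(0,1)^{n_{inv}}$, choosing $A=0$ already gives $\inf_A\mathbb{E}\|\hat{\mathbf{z}}-A\mathbf{z}_{inv}\|^2<n_{inv}$. Any margin measured as a distance in that embedding space is at most $\sqrt{n_{inv}}$. Hence the left-hand side is at most $n_{inv}^{3/2}/\text{margin}(\partial\mathcal{Z}_{inv})$. This proves the literal inequality outright and shows your boundary-strip term is harmless, but it is vacuous. A version whose right-hand side actually shrinks would need a hypothesis controlling the nonlinearity of $\mathbf{r}$, which neither your argument nor the paper's supplies.
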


\begin{remark}
Based on the error bound in \Cref{prop:bound}, dynamic learning rate scheduling can be employed to balance the disentanglement error and the boundary margin during iterative optimization, leading to more stable disentanglement of local graph structures, as described in previous works (e.g., \cite{ma2019disentangled, wang2020disentangled}).
\end{remark}

\paragraph{Theoretical Motivation for Practice.}
\Cref{cor:disentanglement} justifies our objective of learning invariant features. Crucially, \Cref{prop:bound} reveals that the disentanglement error is inversely proportional to the boundary margin. This theoretical insight directly motivates our \textbf{Boundary Embedding Shaping}, which explicitly aims to increase the margin for ``hard" boundary nodes where the error bound is loosest.

\begin{figure*}[t]
    \centering
    \includegraphics[width=0.8\linewidth]{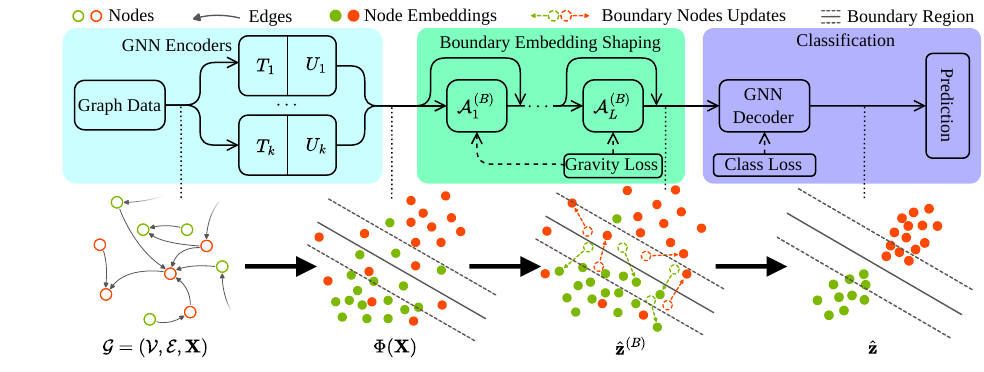}
    \caption{Boundary embedding shaping architecture aims to filter the irrelevant noise introduced by the variability of the local neighboring topology to obtain improved node embeddings for better classification.}
    \label{fig:arch}
\end{figure*}

\section{Boundary Embedding Shaping}
\label{sec:Boundary Embedding Shaping}

Building on the theoretical insights developed in \Cref{sec:Structural Disentanglement Theory}, we now present the boundary embedding shaping framework to improve the embeddings from existing GNN encoders by training the attention layer $\att^{(B)}$, shown in \Cref{fig:arch}.
BES consists of (1) the pairwise objective that instantiates the similarity function in \Cref{thm:idinv} for the structural disentanglement (\Cref{sec:pairwise_formulation}), (2) a gradient-equivalent center-based approximation of the pairwise objective (\Cref{sec:efficient_approximation}), and (3) an adaptive learning mechanism motivated by \Cref{prop:bound} that controls model parameter updates effectively (\Cref{sec:adaptive_learning}).

The initial node embeddings $\Phi (\mathbf{X})$ from existing GNN encoders, denoted as $\Phi = \{\Phi_1, \cdots, \Phi_k\}$, are used to obtain the improved node embeddings during boundary embedding shaping. See \Cref{app:model_details} for more details.

\subsection{Pairwise Contrastive Objective}
\label{sec:pairwise_formulation}
To meet the condition in \Cref{asm:changes} and guarantee the disentanglement of the invariant topological structure, we first identify nodes with high structural variability. 
This variability can be empirically indicated by a normalized shift score $S$ based on local neighbors:
\begin{equation}
\label{equ:sc}
\begin{split}
S(v_i) = \frac{1}{|\mathcal{N}_{\text{emb}}(v_i)|} \left| \{ v_j \in \mathcal{N}_{\text{emb}}(v_i) : y_i \neq y_j \} \right| , \\
\quad \forall v_i \in \border,
\end{split}
\end{equation}
where $\border$ is the node set for the boundary region, $\mathcal{N}_{\text{emb}}(v_i)$ denotes the neighbors of $v_i$ in the embedding space (e.g., $k$-nearest neighbors), and $y_i$ is the category label of node $v_i$.
Instead of evaluating shift scores globally across the graph, we first coarsely mark the boundary region $\border$ between clusters $m$ and $n$ using a margin-based slab around the separating direction of the cluster centers, as defined in \Cref{equ:boundar_area}.
\begin{equation}
\label{equ:boundar_area}
\border := \left\{
\mathbf{x_i}\in\mathbf{X}\;\middle|\;
\big|(\boldsymbol{\mu}_m-\boldsymbol{\mu}_n)^{\top}\boldsymbol{\Sigma}^{-1}\Phi(\mathbf{x_i})\big|
\le \delta
\right\},
\end{equation}
which denotes the boundary region between two node embedding clusters, each characterized by a Gaussian distribution $\mathcal{N}(\boldsymbol{\mu}_m, \boldsymbol{\Sigma}_m)$.
We use embedding space neighbors within the boundary region rather than those from the original graph structure, as their features contribute more directly to the learning objective.
Furthermore, our empirical evaluations demonstrate that the embedding space estimation similarly preserves the heterophilic characteristics from the original graph, while providing a significantly more stable and effective signal for downstream tasks (see \Cref{app:variability estimation}).
Nodes within the boundary region exhibiting high structural variability (e.g., $S(v_i) > 0.5$) are designated as boundary nodes $v_b$, as they possess the maximal diversity of observations needed for structural disentanglement.
Crucially, the identification explicitly leverages label information solely from training nodes, while strictly excluding all test nodes and test labels.

For a designated boundary node $v_b$, \Cref{thm:idinv} dictates that minimizing the contrastive loss with appropriately defined similarity functions $\mathrm{sim}^{pos}$ and $\mathrm{sim}^{neg}$ is required. 
Specifically, the embedding $\Phi(\mathbf{x_b})$ should be aligned with all positive samples $v_p$ sharing the same label $y_b$ (pulling with positive gravity), while being separated from negative samples $v_n$ of different classes (pushing with negative gravity). 
Based on squared Euclidean distance, the pairwise gravity similarity terms are defined as:
\begin{equation}\begin{split}
\mathrm{sim}^{pos}_{g} (\Phi (\mathbf{x_b}), \Phi (\mathbf{x}_p)) = - \| \Phi(\mathbf{x_b}) - \Phi(\mathbf{x}_p) \|_2^2, \\
\quad v_p \in \mathcal{C}_{y_b},
\end{split}\end{equation}
\begin{equation}\begin{split}
\mathrm{sim}^{neg}_{g} (\Phi (\mathbf{x_b}), \Phi (\mathbf{x}_n)) = - \| \Phi(\mathbf{x_b}) - \Phi(\mathbf{x}_n) \|_2^2, \\
\quad v_n \notin \mathcal{C}_{y_b},
\end{split}\end{equation}
where $\mathcal{C}_{y_b}$ denotes the subset of nodes belonging to the same category as $v_b$. 
These pairwise gravity similarity terms directly instantiate the $\mathrm{sim}(\cdot, \cdot)$ in the contrastive loss of \Cref{thm:idinv}, inheriting all its identifiability and disentanglement properties, and we call the overall contrastive objective the gravity loss for structural disentanglement.

\subsection{Efficient Approximation of the Pairwise Objective}
\label{sec:efficient_approximation}
While the exact pairwise $\mathrm{sim}^{pos}_{g}$ and $\mathrm{sim}^{neg}_{g}$ formulation mathematically inherits the identifiability and properties claimed in \Cref{sec:Structural Disentanglement Theory}, enumerating all pairwise similarities alongside global shift score evaluations requires $\mathcal{O}(N^2)$ complexity, which is prohibitive for massive graphs. 
However, observe that for the pairwise similarity aggregation, taking $\mathrm{sim}^{pos}_{g}$ for example, for any class $\mathcal{C}_{y_b}$, we have the following geometric identity:
\begin{equation}
\begin{split}
\sum_{v_p \in \mathcal{C}_{y_b}}  \| \Phi(\mathbf{x_b}) - & \Phi(\mathbf{x}_p) \|_2^2 
 = |\mathcal{C}_{y_b}| \cdot \| \Phi(\mathbf{x_b}) - \boldsymbol{\mu}_{b} \|_2^2 \\
&\quad + \sum_{v_p \in \mathcal{C}_{y_b}} \| \Phi(\mathbf{x}_p) - \boldsymbol{\mu}_{b} \|_2^2,
\end{split}
\end{equation}
where $\boldsymbol{\mu}_{b}$ is the centroid of class $y_b$. Provided that the centroids of different classes are not highly overlapping in most cases, optimizing $\mathrm{sim}^{pos}_{g}$ over all individual class members is geometrically equivalent (in terms of gradients) to optimizing the similarity to the single class centroid, since the second term is constant with respect to $\Phi(\mathbf{x_b})$.
Given that $\nabla_{\Phi(\mathbf{x}_b)}$ of the second term vanishes, aggregating pairwise similarities is exactly gradient-equivalent to using a single class centroid (treated as a stop-gradient quantity during optimization). 
The approximated similarity thus inherits all theoretical properties of the pairwise form, which is a gradient-exact reduction that parallels physical mean-field approximations and conceptually aligns with established paradigms like Prototypical Contrastive Learning \cite{li2021prototypical}.
More empirical results can be found in \Cref{app:similarity_functions}.    

Motivated by this rigorous mathematical equivalence and theoretical inspirations, we propose an efficient surrogate approximation using parameterized Gaussian distributions $\mathcal{N}(\boldsymbol{\mu}_m,\boldsymbol{\Sigma}_m)$ for each cluster $m$ to enable efficient boundary embedding shaping with $\mathcal{O}(N)$ complexity.
For each evaluated boundary node $v_b$, rather than calculating the computationally heavy pairwise similarities, we use its gravity to the approximated category center $\mu_b$ as the proxy positive similarity measure for the loss function in \Cref{thm:idinv}. This precise approximation trains the attention layer $\att^{(B)}$ for structural disentanglement:
\begin{equation}
\begin{split}
\mathrm{sim}^{pos}_{g} & (\Phi (\mathbf{x_b}),  \mu_b) 
= 
- \Bigg[ 
\max \Big( 0, \; 
\|\Phi (\mathbf{x_b}) - \mu_b \|_2 \\
&\quad - \min \big\{ \|\Phi (\mathbf{x_i}) - \mu_b \|_2  \;\big|\; \mathbf{x_i} \in \border \big\} 
\Big) 
\Bigg]^2 .
\end{split}
\end{equation}
For the negative similarity measure, we identically use the centers of other categories $\mu_j$ ($j \neq b$) as the negative repelling targets:
\begin{equation}
\mathrm{sim}^{neg}_{g} (\Phi (\mathbf{x_b}), \mu_j) 
= - \| \Phi (\mathbf{x_b}) - \mu_j \|^2_2.
\end{equation}

By utilizing this Gaussian-center-approximated similarity term, the improvements of the attention layer $\att^{(B)}$ will focus securely and efficiently on boundary nodes incorrectly classified by the pre-trained GNN encoder, firmly bridging the theoretical guarantees with operational practice.

\subsection{Adaptive Structural Disentanglement}
\label{sec:adaptive_learning}
Practical disentanglement via contrastive learning processes often encounters instability and feature-level coupling.
Unstable updates affect the majority of node embeddings through considerable perturbations of the model parameters.
We, instead, utilize an adaptive learning rate to avoid massive perturbations of the attention layer $\att^{(B)}$, which is indicated as feasible by the properties of \Cref{sec:properties}.

To control the update magnitude of the boundary attention parameters $\param^{(B)}$, we employ a pre-update mechanism during training. At each iteration, we randomly sample $\beta$ nodes from the boundary set $\mathcal{B}$ to compute the loss gradient $\nabla_{\param^{(B)}} \loss$.

We first perform a virtual pre-update to obtain temporary parameters:
\begin{equation}
\param'^{(B)} = \param^{(B)} + \eta \nabla_{\param^{(B)}} \loss,
\end{equation}
where $\eta$ is the learning rate. This pre-update induces an embedding change across all nodes, measured by:
\begin{equation}
\Delta^{(B)} = \sum_{v_i \in \mathcal{V}} \left\| \mathcal{A}^{(B)}(\Phi(\mathbf{X}); \param'^{(B)}) - \mathcal{A}^{(B)}(\Phi(\mathbf{X}); \param^{(B)}) \right\|_2,
\end{equation}
where $\mathcal{A}^{(B)}(\cdot; \param)$ denotes the boundary attention layer parameterized by $\param$, and $\Phi(\mathbf{X})$ represents the node features.

The actual parameter update is then adaptively scaled:
\begin{equation}
\label{eq:adaptive_update}
\param^{(B)} \leftarrow \param^{(B)} + \frac{\alpha}{\Delta^{(B)}} \cdot \eta \nabla_{\param^{(B)}} \mathcal{L},
\end{equation}
where $\alpha$ is a scaling hyperparameter. 
The inverse relationship with $\Delta^{(B)}$ ensures that larger embedding perturbations result in smaller parameter updates, preventing excessive changes that could inadvertently push more nodes into boundary regions. We only accept updates where $\Delta^{(B)}$ achieves the minimum change necessary to separate the selected boundary nodes.
Notably, the bound $\mathcal{O}(\sqrt{\epsilon}) + \mathcal{O}\left(\frac{1}{\text{margin}( \cdot )}\right)$ derived in \Cref{prop:bound} has two terms. 
The inversely scaling updates by $\Delta^{(B)}$ prevent overshooting during optimization, decreasing $\epsilon$; and suppress destabilizing perturbations at boundary nodes, preserving the decision-boundary margin to decrease $\frac{1}{\text{margin}( \cdot )}$.
Thus, adaptive scaling is regarded as a theoretically principled operationalization of \Cref{prop:bound}.

The complete computation procedure with complexity analysis is detailed in \Cref{alg: Boundary Embedding Shaping} (See \Cref{app:alg_bes}), producing the improved embeddings $\hat{\mathbf{Z}}^{(B)} = \mathcal{A}^{(B)}(\Phi(\mathbf{X}); \param^{(B)})$.
In addition, the aggregation of GNN tends to blur node distinctions by sharing parameters globally \cite{DBLP:journals/pami/WangJTL23}.
We adopt iterative optimization by sequentially separating $\att^{(B)}$ into $\att^{(B)}_1 \cdots \att^{(B)}_L$ with each layer trained separately to counteract this effect.
It is proved practically useful and provides evidence that parameter separation can prevent feature-level coupling and make $\att^{(B)}$ more effective for structural disentanglement.

BES offers two principal advantages. 
First, its separating property produces highly distinguishable node embeddings, thereby enhancing the discriminability of nodes, particularly beneficial for identifying boundary nodes. 
Second, its intrinsic noise-filtering mechanism effectively eliminates irrelevant structural information, which in turn contributes to improved performance in downstream tasks.

\section{Experiments}

Our experiments demonstrate that BES consistently improves classification accuracy across diverse benchmarks.
We first show that BES outperforms leading methods on the widely used datasets (\Cref{sec:Method Comparison}), achieving up to 5.0\% improvement on WikiCS for node classification.
We then analyze BES (\Cref{sec:Method Analysis}) through visualized processing and embedding changes, detailed ablations, and validate the structural disentanglement from a GNN decoder perspective.
Finally, we examine hyperparameter sensitivity and computational efficiency, confirming that BES achieves substantial gains with minimal overhead.

\begin{table*}[t]
    \caption{Node classification (NC) and link prediction (LP) accuracy ($\uparrow$) for various methods on different datasets. OOM indicates out of memory; - indicates the method does not support the task. The 1st (bold) and 2nd (underlined) results are highlighted.}
    \label{tab:performance-combined}
    \centering
    \setlength{\tabcolsep}{2.8pt}
    \renewcommand{\arraystretch}{.9}
    \begin{tabular}{lcccccccc}
    \toprule
    \multirow{2}{*}{Method} & \multicolumn{2}{c}{Cora} & \multicolumn{2}{c}{CiteSeer} & \multicolumn{2}{c}{Pubmed} & \multicolumn{2}{c}{WikiCS} \\
    \cmidrule(lr){2-3} \cmidrule(lr){4-5} \cmidrule(lr){6-7} \cmidrule(lr){8-9}
    & NC & LP & NC & LP & NC & LP & NC & LP \\
    \midrule
    KGNN              \cite{DBLP:conf/aaai/0001RFHLRG19}        & 87.49 & 85.15 & 76.14 & 87.65 & 89.35 & 85.81 & 28.06 & - \\
    GCN               \cite{yang2016revisiting}                 & 87.74 & 92.30 & 75.74 & 95.30 & 87.36 & 93.70 & 80.48 & 91.89 \\
    Cheb              \cite{DBLP:conf/nips/DefferrardBV16}      & 87.31 & 90.42 & 74.69 & 89.20 & 89.80 & 87.34 & 82.52 & 90.85 \\
    GAT               \cite{DBLP:conf/iclr/VelickovicCCRLB18}   & 87.18 & 90.39 & 76.24 & 92.35 & 86.81 & 89.07 & 80.88 & 90.87 \\
    SGC               \cite{wu2019simplifying}                  & 88.29 & 91.65 & 77.99 & 94.31 & 87.71 & 86.94 & 80.11 & 92.00 \\
    SSGC              \cite{zhu2021simple}                      & 87.86 & 91.44 & \underline{78.10} & 94.26 & 88.33 & 89.26 & 80.28 & 91.55 \\
    GraphSAGE         \cite{DBLP:conf/nips/HamiltonYL17}        & 87.74 & 90.67 & 76.79 & 91.59 & 88.86 & 88.30 & 81.30 & 90.15 \\
    GATv2             \cite{ShakedBrody2022}                    & 86.32 & 89.34 & 75.74 & 93.14 & 87.11 & 88.07 & 81.71 & 90.12 \\
    H2GCN             \cite{DBLP:conf/nips/ZhuYZHAK20}          & 88.69 & 92.27 & 77.74 & 94.97 & 88.77 & 95.73 & 76.27 & \underline{92.04} \\
    SGNN              \cite{zhang2024decouple}                  & 87.74 & \underline{93.38} & 77.34 & \underline{96.07} & 83.46 & 94.60 & 74.36 & 91.01 \\
    NodeImport-GCN    \cite{NodeImport-kdd}                     & 82.14 & 87.22 & 67.73 & 85.39 & 84.22 & 85.14 & 78.12 & 89.56 \\
    NodeImport-GAT    \cite{NodeImport-kdd}                     & 81.18 & 86.97 & 66.05 & 82.95 & 84.17 & 84.65 & 80.03 & 89.43 \\
    NodeImport-SAGE   \cite{NodeImport-kdd}                     & 79.96 & 80.07 & 68.36 & 77.72 & 86.84 & 78.39 & 81.26 & 86.41 \\
    MotifRGC-GCN      \cite{DBLP:conf/aaai/SunHWW0Y24}          & 87.73 & 71.87 & 75.19 & 86.56 & 88.76 & 93.50 & OOM & OOM \\
    MotifRGC-GAT      \cite{DBLP:conf/aaai/SunHWW0Y24}          & 87.62 & 88.08 & 75.34 & 87.37 & 87.70 & \underline{97.64} & OOM & OOM \\
    MotifRGC-SAGE     \cite{DBLP:conf/aaai/SunHWW0Y24}          & 87.54 & 88.82 & 75.73 & 91.01 & 88.75 & 97.51 & OOM & OOM \\
    MPNN-GCN          \cite{luo2024classic}                     & 85.52 & 90.97 & 76.04 & 93.85 & 89.76 & 86.73 & 84.05 & 89.05 \\
    MPNN-GAT          \cite{luo2024classic}                     & 86.14 & 89.43 & 75.44 & 89.96 & 87.29 & 85.64 & \underline{85.48} & 85.87 \\
    MPNN-SAGE         \cite{luo2024classic}                     & 86.20 & 87.52 & 75.19 & 89.76 & \underline{89.99} & 79.55 & 85.40 & 84.73 \\
    GraphECL          \cite{xiao2024efficient}                  & \underline{88.72} & 92.73 & 75.39 & 95.22 & 88.18 & 94.13 & 81.28 & 51.99 \\
    IGCL              \cite{chen2025improving}                  & 88.60 & 92.98 & 77.30 & 95.60 & 87.32 & 96.88 & 82.99 & 89.81 \\
    \midrule
    BES (Ours) & \textbf{89.46} & \textbf{94.45} & \textbf{78.40} & \textbf{96.27} & \textbf{91.03} & \textbf{97.89} & \textbf{85.52} & \textbf{92.27} \\
    \bottomrule
    \end{tabular}
\end{table*}

\paragraph{Datasets and Experiment Setup.} 
We evaluated different methods on homogeneous (WikiCS \cite{mernyei2020wiki}, Cora \cite{pennington2014glove}, Pubmed \cite{Sen2008pubmed}, and CiteSeer \cite{giles1998citeseer}), heterogeneous~\cite{pei2020geomgcn} (Chameleon, Cornell, Texas), heterophilic~\cite{platonov2023critical} (Roman-empire), and massive industrial OGB graphs \cite{hu2020open} (ogbn-arxiv, ogbl-collab). 
On these datasets, we can comprehensively evaluate the performance of our proposed method from multiple perspectives, which demonstrates not only its advantages in classical benchmarks but also its applicability in more complex scenarios, thereby showcasing the robustness and versatility of the model.

We follow the experimental setup of \cite{DBLP:conf/aaai/HoangL24} and evaluate the accuracy of different methods on all non-OGB datasets using random 60\%/20\%/20\% train/validation/test splits.
The OGB experiment follows the official OGB evaluation protocol. 
Random seeds are used to ensure the reproducibility of the experiments. 
The experimental results are averaged over 5 different runs. 
More details of the datasets, models, and experiment setup are in \Cref{app:Additional Experimental Details}.

\paragraph{Baselines.} We conduct comparisons between the proposed methods and both classical and state-of-the-art baselines across multiple datasets.
Classic methods, such as GCN \cite{yang2016revisiting}, KGNN \cite{DBLP:conf/aaai/0001RFHLRG19}, GAT \cite{DBLP:conf/iclr/VelickovicCCRLB18}, GATv2 \cite{ShakedBrody2022}, GraphSAGE \cite{DBLP:conf/nips/HamiltonYL17}, Cheb \cite{DBLP:conf/nips/DefferrardBV16}, SGC \cite{wu2019simplifying}, and SSGC \cite{zhu2021simple} are included as baselines.
Recent methods, including H2GCN \cite{DBLP:conf/nips/ZhuYZHAK20}, SGNN \cite{zhang2024decouple}, MPNN-GCN, MPNN-GAT, MPNN-SAGE \cite{luo2024classic}, 
NodeImport-GCN, NodeImport-GAT, NodeImport-SAGE\cite{NodeImport-kdd}, and contrastive learning methods including GraphECL\cite{xiao2024efficient},MotifRGC\cite{DBLP:conf/aaai/SunHWW0Y24}, IGCL\cite{chen2025improving}, are compared to show the advantage of our method.
Please see \Cref{app:Hyperparameter Settings} for more detailed baseline information.

\subsection{Method Comparison}\label{sec:Method Comparison}

\Cref{tab:performance-combined} presents node classification and link prediction results on homogeneous datasets. 
Additional AUC and F1 results are in \Cref{app:AUC and F1 value on Homogeneous Datasets}, and the results on more challenging heterogeneous, heterophilic, and OGB datasets are presented in \Cref{app:more_datasets}.
BES achieves top performance across all benchmarks on both tasks, demonstrating that boundary-focused embedding shaping effectively filters spurious structural noise in both node-level and edge-level representations.

\noindent\textbf{Node Classification.}
Traditional GNNs (GCN, GAT) entangle noise from neighbors and graph propagation within shared parameters, forcing them to process both task-relevant signals and spurious structural noise simultaneously.
While SGNN decouples these components, it still struggles with fine-grained boundary discrimination in dense neighborhoods.
BES addresses this by employing specialized modules across different embedding shaping stages: initial encoders extract semantic features while boundary-focused contrastive learning selectively refines decision boundaries.
This architectural decoupling enables the two components to capture complementary semantic and structural characteristics independently, thereby yielding more expressive representations.

\noindent\textbf{Link Prediction.}
Edge embeddings inherit noise from their constituent node embeddings.
By filtering spurious structural correlations at the node level, BES produces cleaner edge representations that better capture genuine connectivity patterns.
This explains consistent improvements in link prediction, particularly for edges spanning class boundaries where structural noise is most pronounced.

\noindent\textbf{Key Advantage.}
Unlike methods that uniformly process all nodes, BES explicitly targets boundary regions where structural entanglement causes maximum damage.
By systematically disentangling task-relevant topology from spurious correlations through adaptive contrastive learning with controlled parameter updates, BES overcomes fundamental limitations of both traditional GNNs and recent disentangled methods.
Results demonstrate that this strategy consistently improves accuracy, validating our theoretical analysis.

\subsection{Method Analysis}
\label{sec:Method Analysis}

\begin{figure*}[tb]
    \centering
    \includegraphics[width=\linewidth, page=1]{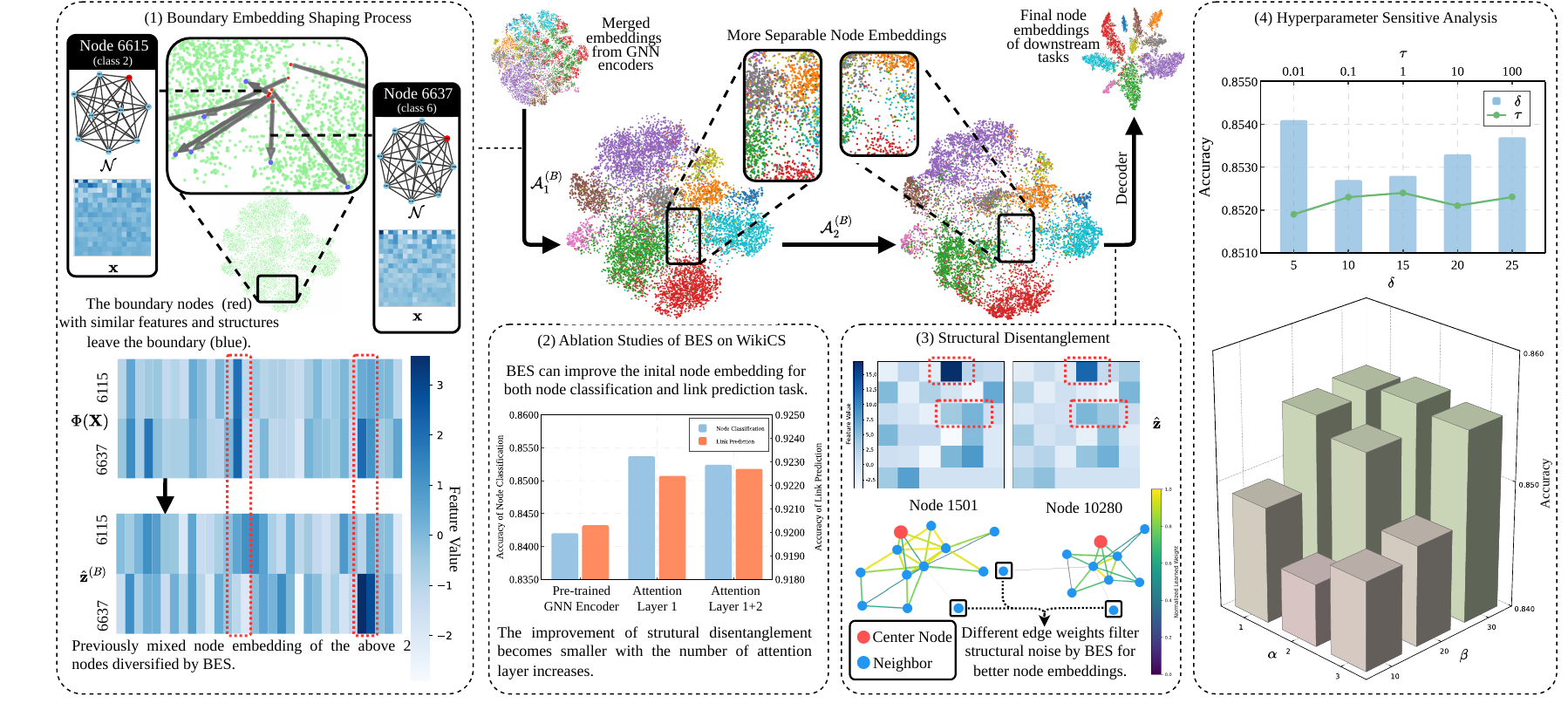}
    \caption{
    Analysis of boundary embedding shaping.
    }
    \label{fig:exp_es}
\end{figure*}

\paragraph{Boundary Embedding Shaping Process}
\Cref{fig:exp_es}(1) visualizes how BES refines decision boundaries on WikiCS for initial embeddings (embeddings from different graph encoders are shown in \Cref{app:embeddings_from_different_graph_encoders}).
The attention layers successfully disentangle structurally similar but semantically distinct nodes, pushing previously ambiguous embeddings away from class boundaries.
For nodes sharing topological patterns but differing in classes, BES filters spurious structural correlations, allowing the contrastive loss to correct representations that were entangled by the initial GNN encoder.
This validates our hypothesis that boundary-focused refinement addresses the core challenge of structural entanglement.
The effect of using different shift score calculations, and the comparison of using the exact pairwise objective and its approximation are in \Cref{app:variability estimation} and \Cref{app:similarity_functions}, respectively. 
More visualization results of BES are in \Cref{app:more_visualization}.

\paragraph{Entangled vs. Disentangled Embeddings}
We compare the classification accuracy of entangled and disentangled embeddings on WikiCS by ablating the attention layers of BES (see \Cref{fig:exp_es}(2)).
The results show that the attention layers of BES can improve the initial entangled node embedding for both node classification and link prediction.
Specifically, the disentanglement effect becomes smaller as the number of attention layers increases, which suggests that BES provides most benefits early in the shaping process, after which additional layers yield marginal gains.
Full ablation results are provided in \Cref{app:ablative_studies}.

\paragraph{Structural Disentanglement of Final Embeddings}
\Cref{fig:exp_es}(3) visualizes final embeddings from the Cheb GNN decoder.
The decoder successfully learns edge weights that filter irrelevant neighbor information, producing well-separated class clusters.
This structural disentanglement directly explains the superior accuracy in \Cref{sec:Method Comparison}, confirming that end-to-end boundary shaping yields robust node representations.

\paragraph{Hyperparameter Sensitivity and Convergence}
\Cref{fig:exp_es}(4) presents sensitivity analysis.
Performance is robust across hyperparameters, with optimal settings at $\tau = 1$ and $\delta = 5$.
Larger $\beta$ (more boundary nodes) and smaller $\alpha$ (gentler updates) improve accuracy, consistent with our adaptive update mechanism.
BES converges under different hyperparameter settings on all datasets, indicating that the adaptive update mechanism is effective.
Additional efficiency analysis is provided in \Cref{app:efficiency}.

\section{Related Work}

\paragraph{Graph Representation Learning.} Graph Neural Networks (GNNs) such as GCN \cite{Kipf2017}, GraphSAGE \cite{DBLP:conf/nips/HamiltonYL17}, and GAT \cite{DBLP:conf/iclr/VelickovicCCRLB18} have laid the foundation for graph representation learning. However, these models often struggle to filter noisy or irrelevant signals introduced by the complex and diverse topological structures in real-world graphs. GCN suffers from over-smoothing in deeper layers, leading to indistinguishable node representations \cite{Kipf2017}. GraphSAGE’s performance is highly sensitive to aggregation strategies \cite{ZengZSKP20}, while GAT introduces significant computational overhead without effectively suppressing topological noise \cite{DBLP:conf/iclr/VelickovicCCRLB18}. Recent efforts to improve scalability, such as GraphSAINT \cite{ZengZSKP20}, or to enhance robustness via contrastive learning (e.g., GraphECL \cite{xiao2024efficient}, IGCL \cite{chen2025improving}) offer partial solutions, but they do not explicitly disentangle informative features from noisy graph structure.
Transformer-based architectures in graph learning, such as Graphormer \cite{DBLP:conf/nips/YingCLZKHSL21}, leverage self-attention to capture long-range dependencies, but they often overlook localized noise and exhibit high computational costs on large graphs.

\paragraph{Contrastive Representation Learning.}
Class‑guided contrastive learning has become a cornerstone of graph representation learning, with seminal works such as Supervised Contrastive Learning (SCL)~\cite{DBLP:conf/nips/KhoslaTWSTIMLK20}, Prototypical Contrastive Learning (PCL)~\cite{li2021prototypical}, and GraphECL~\cite{xiao2024efficient}. 
These methods avoid exhaustive pairwise comparisons by employing a set of cluster prototypes (or centroids) as anchors to approximate global feature distributions, achieving computational efficiency and stable optimization. 
Motivated by the prototype-based paradigm, BES adopts center-based similarity to efficiently capture structural entanglement and further designs an adaptive pre-update mechanism to selectively filter noisy topological structures.

\paragraph{Identifiability and Causal Modeling.}
Recent work on identifiable latent causal models~\cite{liu2023identifiable,liu2022identifying,liu2026i,liu2026beyond,liu2025latent,DBLP:conf/mm/BaoX0OBSHQ25} provides a theoretical foundation for disentangling latent structures under limited supervision. 
These studies inspire our BES approach, suggesting that robust latent representations can be achieved by leveraging class-guided signals and sufficient structural variability.

\paragraph{Disentangled Graph Representation Learning.}
Disentangled representation learning on graphs aims to decompose node embeddings into independent factors (feature-level disentanglement) or separate task-relevant subgraphs from noise (structure-level disentanglement) \cite{ma2019disentangled}. 
Feature-level entanglement is primarily caused by shared model parameters of GNNs, which existing methods attempt to resolve by introducing multi-channel graph encoders and regularization objectives to encourage factorized latent representations \cite{ma2019disentangled}.
However, these approaches still ignore neighbor noise during message passing, leading to cross-node interference that limits their ability to isolate the sources of entanglement. 
Other works focus on structure-level disentanglement by identifying task-relevant subgraphs or suppressing spurious edges \cite{miao2022interpretable}. 
Despite their effectiveness, feature encoding and structure learning are often optimized jointly, and most existing approaches lack a principled separation between feature-level and structure-level entanglement, motivating the need for frameworks that explicitly control feature coupling to better study structure-induced interference.

\section{Conclusion}\label{sec:conclusions}
In this work, we tackled the challenge of graph structural entanglement by revealing boundary nodes as the principal sources of topological ambiguity, where irrelevant topological signals deteriorate representation learning.
Building upon this insight, we introduced Boundary Embedding Shaping (BES), a general-purpose plug-in module that leverages adaptive contrastive learning to disentangle predictive structural information from spurious correlations.
Theoretical analysis and empirical results confirm that BES effectively refines decision boundaries, leading to significant performance gains on standard benchmarks.
Furthermore, extending BES to low-label and self-supervised settings, where class-guided signals are scarce or absent, represents a promising avenue for broadening its applicability to large-scale graphs with limited annotation.

\section*{Impact Statement}

This paper presents work whose goal is to filter the noisy topological information as much as possible throughout the learning process to obtain a higher-quality embedding for node classification. 
The work can have potential benefits in various domains, from social network analysis to molecular structure prediction. 
We foresee no negative societal consequences.

\section*{Acknowledgements}
This work was supported by the Yunnan Fundamental Research Project (202301AU070147).

\bibliography{d3_references}

\begin{thebibliography}{60}
\providecommand{\natexlab}[1]{#1}
\providecommand{\url}[1]{\texttt{#1}}
\expandafter\ifx\csname urlstyle\endcsname\relax
  \providecommand{\doi}[1]{doi: #1}\else
  \providecommand{\doi}{doi: \begingroup \urlstyle{rm}\Url}\fi

\bibitem[Bao et~al.(2025)Bao, Xue, Chen, Ou, Beheshti, Sheng, van~den Hengel,
  and Qi]{DBLP:conf/mm/BaoX0OBSHQ25}
Bao, S., Xue, Z., Chen, Q., Ou, S., Beheshti, A., Sheng, Q.~Z., van~den Hengel,
  A., and Qi, Y.
\newblock Causal{MVC}: Causal content-style representation learning for deep
  multi-view clustering.
\newblock In \emph{Proceedings of the International Conference on Multimedia
  (MM)}, pp.\  1598--1606, 2025.

\bibitem[Bishop(1998)]{bishop1998latent}
Bishop, C.~M.
\newblock Latent variable models.
\newblock In \emph{Learning in Graphical Models}, volume~89, pp.\  371--403.
  Springer, 1998.

\bibitem[Brody et~al.(2022)Brody, Alon, and Yahav]{ShakedBrody2022}
Brody, S., Alon, U., and Yahav, E.
\newblock How attentive are graph attention networks?
\newblock In \emph{Proceedings of the International Conference on Learning
  Representations (ICLR)}, 2022.

\bibitem[Cai et~al.(2025)Cai, Liu, Gao, Jiang, Zhang, van~den Hengel, and
  Shi]{cai2025on}
Cai, Y., Liu, Y., Gao, E., Jiang, T., Zhang, Z., van~den Hengel, A., and Shi,
  J.~Q.
\newblock On the value of cross-modal misalignment in multimodal representation
  learning.
\newblock In \emph{Proceedings of the Conference on Neural Information
  Processing Systems (NeurIPS)}, 2025.

\bibitem[Chen et~al.(2025{\natexlab{a}})Chen, Liu, Hooi, He, Hu, and
  Chen]{NodeImport-kdd}
Chen, N., Liu, Z., Hooi, B., He, B., Hu, J., and Chen, J.
\newblock Nodeimport: Imbalanced node classification with node importance
  assessment.
\newblock In \emph{Proceedings of the ACM SIGKDD Conference on Knowledge
  Discovery and Data Mining (KDD)}, pp.\  94--105, 2025{\natexlab{a}}.

\bibitem[Chen et~al.(2020)Chen, Kornblith, Norouzi, and
  Hinton]{DBLP:conf/icml/ChenK0H20}
Chen, T., Kornblith, S., Norouzi, M., and Hinton, G.~E.
\newblock A simple framework for contrastive learning of visual
  representations.
\newblock In \emph{Proceedings of the International Conference on Machine
  Learning (ICML)}, pp.\  1597--1607, 2020.

\bibitem[Chen et~al.(2025{\natexlab{b}})Chen, Yue, Duan, and
  Yu]{chen2025improving}
Chen, X., Yue, K., Duan, L., and Yu, L.
\newblock Improving graph contrastive learning with community structure.
\newblock In \emph{Proceedings of the Conference on Uncertainty in Artificial
  Intelligence (UAI)}, pp.\  568--585, 2025{\natexlab{b}}.

\bibitem[Daunhawer et~al.(2023)Daunhawer, Bizeul, Palumbo, Marx, and
  Vogt]{daunhawer2023identifiability}
Daunhawer, I., Bizeul, A., Palumbo, E., Marx, A., and Vogt, J.~E.
\newblock Identifiability results for multimodal contrastive learning.
\newblock In \emph{Proceedings of the International Conference on Learning
  Representations (ICLR)}, 2023.

\bibitem[Defferrard et~al.(2016)Defferrard, Bresson, and
  Vandergheynst]{DBLP:conf/nips/DefferrardBV16}
Defferrard, M., Bresson, X., and Vandergheynst, P.
\newblock Convolutional neural networks on graphs with fast localized spectral
  filtering.
\newblock In \emph{Proceedings of the Conference on Neural Information
  Processing Systems (NeurIPS)}, pp.\  3837--3845, 2016.

\bibitem[Deng et~al.(2009)Deng, Dong, Socher, Li, Li, and
  Fei-Fei]{deng2009imagenet}
Deng, J., Dong, W., Socher, R., Li, L.-J., Li, K., and Fei-Fei, L.
\newblock Imagenet: A large-scale hierarchical image database.
\newblock In \emph{Proceedings of the IEEE Conference on Computer Vision and
  Pattern Recognition (CVPR)}, pp.\  248--255, 2009.

\bibitem[Deng(2012)]{deng2012mnist}
Deng, L.
\newblock The mnist database of handwritten digit images for machine learning
  research.
\newblock \emph{IEEE Signal Processing Magazine}, 29\penalty0 (6):\penalty0
  141--142, 2012.

\bibitem[Giles et~al.(1998)Giles, Bollacker, and Lawrence]{giles1998citeseer}
Giles, C.~L., Bollacker, K.~D., and Lawrence, S.
\newblock Citeseer: An automatic citation indexing system.
\newblock In \emph{Proceedings of the ACM International Conference on Digital
  Libraries (DL)}, pp.\  89--98, 1998.

\bibitem[Hamilton et~al.(2017)Hamilton, Ying, and
  Leskovec]{DBLP:conf/nips/HamiltonYL17}
Hamilton, W.~L., Ying, Z., and Leskovec, J.
\newblock Inductive representation learning on large graphs.
\newblock In \emph{Proceedings of the Conference on Neural Information
  Processing Systems (NeurIPS)}, pp.\  1024--1034, 2017.

\bibitem[Hoang \& Lee(2024)Hoang and Lee]{DBLP:conf/aaai/HoangL24}
Hoang, V.~T. and Lee, O.
\newblock Transitivity-preserving graph representation learning for bridging
  local connectivity and role-based similarity.
\newblock In \emph{Proceedings of the AAAI Conference on Artificial
  Intelligence (AAAI)}, pp.\  12456--12465, 2024.

\bibitem[Hu et~al.(2020)Hu, Fey, Zitnik, Dong, Ren, Liu, Catasta, and
  Leskovec]{hu2020open}
Hu, W., Fey, M., Zitnik, M., Dong, Y., Ren, H., Liu, B., Catasta, M., and
  Leskovec, J.
\newblock Open graph benchmark: Datasets for machine learning on graphs.
\newblock In \emph{Proceedings of the Conference on Neural Information
  Processing Systems (NeurIPS)}, 2020.

\bibitem[Khosla et~al.(2020)Khosla, Teterwak, Wang, Sarna, Tian, Isola,
  Maschinot, Liu, and Krishnan]{DBLP:conf/nips/KhoslaTWSTIMLK20}
Khosla, P., Teterwak, P., Wang, C., Sarna, A., Tian, Y., Isola, P., Maschinot,
  A., Liu, C., and Krishnan, D.
\newblock Supervised contrastive learning.
\newblock In \emph{Proceedings of the Conference on Neural Information
  Processing Systems (NeurIPS)}, 2020.

\bibitem[Kipf \& Welling(2017)Kipf and Welling]{Kipf2017}
Kipf, T.~N. and Welling, M.
\newblock Semi-supervised classification with graph convolutional networks.
\newblock In \emph{Proceedings of the International Conference on Learning
  Representations (ICLR)}, 2017.

\bibitem[Koh \& Liang(2017)Koh and Liang]{koh2017understanding}
Koh, P.~W. and Liang, P.
\newblock Understanding black-box predictions via influence functions.
\newblock In \emph{Proceedings of the International Conference on Machine
  Learning (ICML)}, pp.\  1885--1894, 2017.

\bibitem[Li et~al.(2021)Li, Zhou, Xiong, and Hoi]{li2021prototypical}
Li, J., Zhou, P., Xiong, C., and Hoi, S.~C.
\newblock Prototypical contrastive learning of unsupervised representations.
\newblock In \emph{Proceedings of the International Conference on Learning
  Representations (ICLR)}, 2021.

\bibitem[Li et~al.(2025)Li, Zhang, Wang, and
  Xia]{DBLP:journals/corr/abs-2410-11323}
Li, L., Zhang, Y., Wang, G., and Xia, K.
\newblock Kolmogorov--arnold graph neural networks for molecular property
  prediction.
\newblock \emph{Nature Machine Intelligence}, 7\penalty0 (8):\penalty0
  1346--1354, 2025.

\bibitem[Liu et~al.(2024)Liu, Zhang, Gong, Gong, Huang, van~den Hengel, Zhang,
  and Shi]{liu2023identifiable}
Liu, Y., Zhang, Z., Gong, D., Gong, M., Huang, B., van~den Hengel, A., Zhang,
  K., and Shi, J.~Q.
\newblock Identifiable latent polynomial causal models through the lens of
  change.
\newblock In \emph{Proceedings of the International Conference on Learning
  Representations (ICLR)}, 2024.

\bibitem[Liu et~al.(2025)Liu, Zhang, Gong, Gong, Huang, van~den Hengel, Zhang,
  and Shi]{liu2025latent}
Liu, Y., Zhang, Z., Gong, D., Gong, M., Huang, B., van~den Hengel, A., Zhang,
  K., and Shi, J.~Q.
\newblock Latent covariate shift: Unlocking partial identifiability for
  multi-source domain adaptation.
\newblock \emph{Transactions on Machine Learning Research}, 2025.
\newblock ISSN 2835-8856.

\bibitem[Liu et~al.(2026{\natexlab{a}})Liu, Gong, Cai, Gao, Zhang, Huang, Gong,
  van~den Hengel, and Shi]{liu2026i}
Liu, Y., Gong, D., Cai, Y., Gao, E., Zhang, Z., Huang, B., Gong, M., van~den
  Hengel, A., and Shi, J.~Q.
\newblock I predict therefore i am: Is next token prediction enough to learn
  human-interpretable concepts from data?
\newblock In \emph{Proceedings of the International Conference on Learning
  Representations (ICLR)}, 2026{\natexlab{a}}.

\bibitem[Liu et~al.(2026{\natexlab{b}})Liu, Zhang, Gong, Gao, Huang, Gong,
  van~den Hengel, Zhang, and Shi]{liu2026beyond}
Liu, Y., Zhang, Z., Gong, D., Gao, E., Huang, B., Gong, M., van~den Hengel, A.,
  Zhang, K., and Shi, J.~Q.
\newblock Beyond {DAG}s: A latent partial causal model for multimodal learning.
\newblock In \emph{Proceedings of the International Conference on Learning
  Representations (ICLR)}, 2026{\natexlab{b}}.

\bibitem[Liu et~al.(2026{\natexlab{c}})Liu, Zhang, Gong, Gong, Huang, van~den
  Hengel, Zhang, and Shi]{liu2022identifying}
Liu, Y., Zhang, Z., Gong, D., Gong, M., Huang, B., van~den Hengel, A., Zhang,
  K., and Shi, J.~Q.
\newblock Identifying weight-variant latent causal models.
\newblock \emph{Journal of Machine Learning Research}, 27\penalty0
  (4):\penalty0 1--49, 2026{\natexlab{c}}.

\bibitem[Luo et~al.(2024)Luo, Shi, and Wu]{luo2024classic}
Luo, Y., Shi, L., and Wu, X.-M.
\newblock Classic gnns are strong baselines: Reassessing gnns for node
  classification.
\newblock In \emph{Proceedings of the Conference on Neural Information
  Processing Systems (NeurIPS)}, pp.\  97650--97669, 2024.

\bibitem[Lyu et~al.(2024)Lyu, Li, Xie, and Zhang]{lyu2024enhancing}
Lyu, Y., Li, C., Xie, S., and Zhang, X.
\newblock Enhancing robustness of graph neural networks on social media with
  explainable inverse reinforcement learning.
\newblock In \emph{Proceedings of the Conference on Neural Information
  Processing Systems (NeurIPS)}, pp.\  31736--31758, 2024.

\bibitem[Ma et~al.(2019)Ma, Cui, Kuang, Wang, and Zhu]{ma2019disentangled}
Ma, J., Cui, P., Kuang, K., Wang, X., and Zhu, W.
\newblock Disentangled graph convolutional networks.
\newblock In \emph{Proceedings of the International Conference on Machine
  Learning (ICML)}, pp.\  4212--4221, 2019.

\bibitem[McCallum et~al.(2000)McCallum, Nigam, Rennie, and
  Seymore]{mccallum2000Cora}
McCallum, A.~K., Nigam, K., Rennie, J., and Seymore, K.
\newblock Automating the construction of internet portals with machine
  learning.
\newblock \emph{Information Retrieval}, 3:\penalty0 127--163, 2000.

\bibitem[Mernyei \& Cangea(2020)Mernyei and Cangea]{mernyei2020wiki}
Mernyei, P. and Cangea, C.
\newblock Wiki-cs: A wikipedia-based benchmark for graph neural networks.
\newblock \emph{arXiv preprint arXiv:2007.02901}, 2020.

\bibitem[Miao et~al.(2022)Miao, Liu, and Li]{miao2022interpretable}
Miao, S., Liu, M., and Li, P.
\newblock Interpretable and generalizable graph learning via stochastic
  attention mechanism.
\newblock In \emph{Proceedings of the International Conference on Machine
  Learning (ICML)}, pp.\  15524--15543, 2022.

\bibitem[Morris et~al.(2019)Morris, Ritzert, Fey, Hamilton, Lenssen, Rattan,
  and Grohe]{DBLP:conf/aaai/0001RFHLRG19}
Morris, C., Ritzert, M., Fey, M., Hamilton, W.~L., Lenssen, J.~E., Rattan, G.,
  and Grohe, M.
\newblock Weisfeiler and leman go neural: Higher-order graph neural networks.
\newblock In \emph{Proceedings of the AAAI Conference on Artificial
  Intelligence (AAAI)}, pp.\  4602--4609, 2019.

\bibitem[Pei et~al.(2020)Pei, Wei, Chang, Lei, and Yang]{pei2020geomgcn}
Pei, H., Wei, B., Chang, K. C.-C., Lei, Y., and Yang, B.
\newblock Geom-gcn: Geometric graph convolutional networks.
\newblock In \emph{Proceedings of the International Conference on Learning
  Representations (ICLR)}, 2020.

\bibitem[Pennington et~al.(2014)Pennington, Socher, and
  Manning]{pennington2014glove}
Pennington, J., Socher, R., and Manning, C.~D.
\newblock Glove: Global vectors for word representation.
\newblock In \emph{Proceedings of the Conference on Empirical Methods in
  Natural Language Processing (EMNLP)}, pp.\  1532--1543, 2014.

\bibitem[Platonov et~al.(2023)Platonov, Kuznedelev, Diskin, Babenko, and
  Prokhorenkova]{platonov2023critical}
Platonov, O., Kuznedelev, D., Diskin, M., Babenko, A., and Prokhorenkova, L.
\newblock A critical look at the evaluation of gnns under heterophily: Are we
  really making progress?
\newblock In \emph{Proceedings of the International Conference on Learning
  Representations (ICLR)}, 2023.

\bibitem[Rusch et~al.(2023)Rusch, Bronstein, and
  Mishra]{DBLP:journals/corr/abs-2303-10993}
Rusch, T.~K., Bronstein, M.~M., and Mishra, S.
\newblock A survey on oversmoothing in graph neural networks.
\newblock \emph{arXiv preprint arXiv:2303.10993}, 2023.

\bibitem[Sen et~al.(2008)Sen, Namata, Bilgic, Getoor, Galligher, and
  Eliassi-Rad]{Sen2008pubmed}
Sen, P., Namata, G., Bilgic, M., Getoor, L., Galligher, B., and Eliassi-Rad, T.
\newblock Collective classification in network data.
\newblock \emph{AI Magazine}, 29\penalty0 (3):\penalty0 93--106, 2008.

\bibitem[Shrivastava et~al.(2016)Shrivastava, Gupta, and
  Girshick]{shrivastava2016training}
Shrivastava, A., Gupta, A., and Girshick, R.
\newblock Training region-based object detectors with online hard example
  mining.
\newblock In \emph{Proceedings of the IEEE Conference on Computer Vision and
  Pattern Recognition (CVPR)}, pp.\  761--769, 2016.

\bibitem[Subramonian et~al.(2024)Subramonian, Sagun, and
  Sun]{DBLP:conf/icml/SubramonianSS24}
Subramonian, A., Sagun, L., and Sun, Y.
\newblock Networked inequality: Preferential attachment bias in graph neural
  network link prediction.
\newblock In \emph{Proceedings of the International Conference on Machine
  Learning (ICML)}, pp.\  46891--46925, 2024.

\bibitem[Sun et~al.(2024)Sun, Huang, Wang, Wang, Peng, and
  Yu]{DBLP:conf/aaai/SunHWW0Y24}
Sun, L., Huang, Z., Wang, Z., Wang, F., Peng, H., and Yu, P.~S.
\newblock Motif-aware riemannian graph neural network with
  generative-contrastive learning.
\newblock In \emph{Proceedings of the AAAI Conference on Artificial
  Intelligence (AAAI)}, pp.\  9044--9052, 2024.

\bibitem[Velickovic et~al.(2018)Velickovic, Cucurull, Casanova, Romero,
  Li{\`{o}}, and Bengio]{DBLP:conf/iclr/VelickovicCCRLB18}
Velickovic, P., Cucurull, G., Casanova, A., Romero, A., Li{\`{o}}, P., and
  Bengio, Y.
\newblock Graph attention networks.
\newblock In \emph{Proceedings of the International Conference on Learning
  Representations (ICLR)}, 2018.

\bibitem[von K{\"{u}}gelgen et~al.(2021)von K{\"{u}}gelgen, Sharma, Gresele,
  Brendel, Sch{\"{o}}lkopf, Besserve, and Locatello]{von2021self}
von K{\"{u}}gelgen, J., Sharma, Y., Gresele, L., Brendel, W., Sch{\"{o}}lkopf,
  B., Besserve, M., and Locatello, F.
\newblock Self-supervised learning with data augmentations provably isolates
  content from style.
\newblock In \emph{Proceedings of the Conference on Neural Information
  Processing Systems (NeurIPS)}, pp.\  16451--16467, 2021.

\bibitem[Wang et~al.(2023)Wang, Jiang, Tang, and
  Luo]{DBLP:journals/pami/WangJTL23}
Wang, B., Jiang, B., Tang, J., and Luo, B.
\newblock Generalizing aggregation functions in gnns: Building high capacity
  and robust gnns via nonlinear aggregation.
\newblock \emph{IEEE Transactions on Pattern Analysis and Machine
  Intelligence}, 45\penalty0 (11):\penalty0 13454--13466, 2023.

\bibitem[Wang et~al.(2020)Wang, Jin, Zhang, He, Xu, and
  Chua]{wang2020disentangled}
Wang, X., Jin, H., Zhang, A., He, X., Xu, T., and Chua, T.
\newblock Disentangled graph collaborative filtering.
\newblock In \emph{Proceedings of the ACM SIGIR Conference on Research and
  Development in Information Retrieval (SIGIR)}, pp.\  1001--1010, 2020.

\bibitem[Wu et~al.(2019)Wu, Souza, Zhang, Fifty, Yu, and
  Weinberger]{wu2019simplifying}
Wu, F., Souza, A., Zhang, T., Fifty, C., Yu, T., and Weinberger, K.
\newblock Simplifying graph convolutional networks.
\newblock In \emph{Proceedings of the International Conference on Machine
  Learning (ICML)}, pp.\  6861--6871, 2019.

\bibitem[Wu et~al.(2022)Wu, Sun, Zhang, Xie, and Cui]{wu2024gnn_recsys}
Wu, S., Sun, F., Zhang, W., Xie, X., and Cui, B.
\newblock Graph neural networks in recommender systems: A survey.
\newblock \emph{ACM Computing Surveys}, 55\penalty0 (5):\penalty0 1--37, 2022.

\bibitem[Xia et~al.(2022)Xia, Wu, Wang, Chen, and Li]{xia2022progcl}
Xia, J., Wu, L., Wang, G., Chen, J., and Li, S.~Z.
\newblock Pro{GCL}: Rethinking hard negative mining in graph contrastive
  learning.
\newblock In \emph{Proceedings of the International Conference on Machine
  Learning (ICML)}, pp.\  24332--24346, 2022.

\bibitem[Xiao et~al.(2024)Xiao, Zhu, Zhang, Guo, Aggarwal, Wang, and
  Honavar]{xiao2024efficient}
Xiao, T., Zhu, H., Zhang, Z., Guo, Z., Aggarwal, C.~C., Wang, S., and Honavar,
  V.~G.
\newblock Efficient contrastive learning for fast and accurate inference on
  graphs.
\newblock In \emph{Proceedings of the International Conference on Machine
  Learning (ICML)}, pp.\  54363--54381, 2024.

\bibitem[Yang et~al.(2016)Yang, Cohen, and Salakhudinov]{yang2016revisiting}
Yang, Z., Cohen, W., and Salakhudinov, R.
\newblock Revisiting semi-supervised learning with graph embeddings.
\newblock In \emph{Proceedings of the International Conference on Machine
  Learning (ICML)}, pp.\  40--48, 2016.

\bibitem[Yao et~al.(2024)Yao, Xu, Lachapelle, Magliacane, Taslakian, Martius,
  K{\"u}gelgen, and Locatello]{yao2024multi}
Yao, D., Xu, D., Lachapelle, S., Magliacane, S., Taslakian, P., Martius, G.,
  K{\"u}gelgen, J.~v., and Locatello, F.
\newblock Multi-view causal representation learning with partial observability.
\newblock In \emph{Proceedings of the International Conference on Learning
  Representations (ICLR)}, 2024.

\bibitem[Ying et~al.(2021)Ying, Cai, Luo, Zheng, Ke, He, Shen, and
  Liu]{DBLP:conf/nips/YingCLZKHSL21}
Ying, C., Cai, T., Luo, S., Zheng, S., Ke, G., He, D., Shen, Y., and Liu, T.
\newblock Do transformers really perform badly for graph representation?
\newblock In \emph{Proceedings of the Conference on Neural Information
  Processing Systems (NeurIPS)}, pp.\  28877--28888, 2021.

\bibitem[You et~al.(2020)You, Chen, Sui, Chen, Wang, and
  Shen]{DBLP:conf/nips/YouCSCWS20}
You, Y., Chen, T., Sui, Y., Chen, T., Wang, Z., and Shen, Y.
\newblock Graph contrastive learning with augmentations.
\newblock In \emph{Proceedings of the Conference on Neural Information
  Processing Systems (NeurIPS)}, pp.\  5812--5823, 2020.

\bibitem[Zeng et~al.(2020)Zeng, Zhou, Srivastava, Kannan, and
  Prasanna]{ZengZSKP20}
Zeng, H., Zhou, H., Srivastava, A., Kannan, R., and Prasanna, V.~K.
\newblock Graph{SAINT}: Graph sampling based inductive learning method.
\newblock In \emph{Proceedings of the International Conference on Learning
  Representations (ICLR)}, 2020.

\bibitem[Zhang et~al.(2024{\natexlab{a}})Zhang, Zhu, and Li]{zhang2024decouple}
Zhang, H., Zhu, Y., and Li, X.
\newblock Decouple graph neural networks: Train multiple simple gnns
  simultaneously instead of one.
\newblock \emph{IEEE Transactions on Pattern Analysis and Machine
  Intelligence}, 46\penalty0 (11):\penalty0 7451--7462, 2024{\natexlab{a}}.

\bibitem[Zhang et~al.(2025)Zhang, Lin, Zhang, Wang, Wu, Ye, Zhao, Wang, Ying,
  Kang, et~al.]{zhang2025graph}
Zhang, O., Lin, H., Zhang, X., Wang, X., Wu, Z., Ye, Q., Zhao, W., Wang, J.,
  Ying, K., Kang, Y., et~al.
\newblock Graph neural networks in modern ai-aided drug discovery.
\newblock \emph{Chemical Reviews}, 125\penalty0 (20):\penalty0 10001--10103,
  2025.

\bibitem[Zhang et~al.(2024{\natexlab{b}})Zhang, Yan, Zhang, Li, Wang, Huang,
  and Kim]{transgnn2024}
Zhang, P., Yan, Y., Zhang, X., Li, C., Wang, S., Huang, F., and Kim, S.
\newblock Trans{GNN}: Harnessing the collaborative power of transformers and
  graph neural networks for recommender systems.
\newblock In \emph{Proceedings of the ACM SIGIR Conference on Research and
  Development in Information Retrieval (SIGIR)}, pp.\  1285--1295,
  2024{\natexlab{b}}.

\bibitem[Zhu \& Koniusz(2021)Zhu and Koniusz]{zhu2021simple}
Zhu, H. and Koniusz, P.
\newblock Simple spectral graph convolution.
\newblock In \emph{Proceedings of the International Conference on Learning
  Representations (ICLR)}, 2021.

\bibitem[Zhu et~al.(2020)Zhu, Yan, Zhao, Heimann, Akoglu, and
  Koutra]{DBLP:conf/nips/ZhuYZHAK20}
Zhu, J., Yan, Y., Zhao, L., Heimann, M., Akoglu, L., and Koutra, D.
\newblock Beyond homophily in graph neural networks: Current limitations and
  effective designs.
\newblock In \emph{Proceedings of the Conference on Neural Information
  Processing Systems (NeurIPS)}, pp.\  7793--7804, 2020.

\bibitem[Zhu et~al.(2021)Zhu, Xu, Yu, Liu, Wu, and Wang]{Zhu_GCA}
Zhu, Y., Xu, Y., Yu, F., Liu, Q., Wu, S., and Wang, L.
\newblock Graph contrastive learning with adaptive augmentation.
\newblock In \emph{Proceedings of the Web Conference (WWW)}, pp.\  2069--2080,
  2021.

\bibitem[Zimmermann et~al.(2021)Zimmermann, Sharma, Schneider, Bethge, and
  Brendel]{zimmermann2021contrastive}
Zimmermann, R.~S., Sharma, Y., Schneider, S., Bethge, M., and Brendel, W.
\newblock Contrastive learning inverts the data generating process.
\newblock In \emph{Proceedings of the International Conference on Machine
  Learning (ICML)}, pp.\  12979--12990, 2021.

\end{thebibliography}

\newpage
\appendix
\onecolumn
\section*{Appendix}
\appendix
\setcounter{secnumdepth}{2}

\section{Proofs of Main Results}\label{app:Proofs}

\subsection{Proof of \texorpdfstring{\Cref{thm:idinv}}{Thm. 3.1}}
\label{app:idproof}

In this section, we provide a detailed proof of \Cref{thm:idinv}. To begin, we restate the theorem for clarity.
\idinv*

\begin{proof}
Our proof follows a similar approach to that of \cite{von2021self}, adapted to our specific context. The proof consists of four key steps: 
\begin{itemize}
    \item First, we derive the asymptotic form of the contrastive loss defined in \cref{thm:idinv} and demonstrate that this loss can attain a global minimum of zero.
    \item  Second, we establish that any smooth function achieving this global minimum must satisfy an alignment condition, which ensures that the learned representations of two views in a positive pair are aligned across all sample pairs. 
    \item Third, using a contradiction, we prove that the learned representation cannot contain any component from the variant latent block.
    \item  Finally, we complete the proof by demonstrating that the learned representation includes all information from the invariant latent block.
\end{itemize}

\smallskip
\paragraph{Step 1.} As shown in previous works \cite{von2021self, daunhawer2023identifiability}, given infinite samples and when \(\tau=1\), the contrastive loss in \cref{thm:idinv} can asymptotically approach the following loss:
\begin{equation}
\label{eq:loss_thm}
    \mathcal{L}_{\text{AlignMaxEnt}}(\mathbf{f}) = \mathbb{E}_{(\mathbf{x}, \tilde{\mathbf{x}})\sim p_{\mathbf{x}, \tilde{\mathbf{x}}}}\big[ \|\mathbf{f}(\mathbf{x}) - \mathbf{f}(\tilde{\mathbf{x}})\|_2 \big] 
    - H \big(\mathbf{f}(\mathbf{x})\big),
\end{equation}
where \(H\) denotes the differential entropy.

Furthermore, it is known that the global minimum of this loss is 0, as there exists a smooth function such that
\begin{equation}
    \mathbf{f}^* = \mathbf{d}\circ (\mathbf{g}^{-1})_{1:n_{inv}} : \mathcal{X} \to (0, 1)^{n_{inv}}.
\end{equation}
Here, \(\mathbf{g}\) denotes the true underlying generative process as defined in \Cref{sec:lvm}; the operator \((\cdot)_{1:n_{inv}}\) extracts the components corresponding to the invariant latent variables; and \(\mathbf{d}\) is the Darmois construction \cite{von2021self}, where for each \(i \in \{1, \dots, n_{inv}\}\):
\[
\begin{aligned}
d_i(\mathbf{z}_{inv}) &= \mathrm{CDF}_i(z_{inv,i} \mid \mathbf{z}_{inv,[1:i-1]}) \\[2pt]
&= \mathbb{P}(Z_{inv,i}\leq z_{inv,i} \mid \mathbf{z}_{inv,[1:i-1]}),
\end{aligned}
\]

with \(\mathrm{CDF}_i\) denoting the conditional cumulative distribution function of \(z_{inv,i}\) given \(\mathbf{z}_{inv,[1:i-1]}\).

To prove this, we simply substitute the candidate function into \cref{eq:loss_thm} and obtain

\begin{equation}
\label{eq:candidate_sub}
\begin{split}
\mathcal{L}_{\text{AlignMaxEnt}}(\mathbf{f}^*) 
&= \mathbb{E}_{(\mathbf{x}, \tilde{\mathbf{x}})\sim p_{\mathbf{x}, \tilde{\mathbf{x}}}}\!\bigl[ \|\mathbf{d}(\mathbf{z}_{inv}) - \mathbf{d}(\tilde{\mathbf{z}}_{inv})\|_2 \bigr] \\
&\quad - H\!\bigl(\mathbf{d}(\mathbf{z}_{inv})\bigr).
\end{split}
\end{equation}

The substitution is justified by the invertibility of \(\mathbf{g}^{-1}\) as formalized in \Cref{sec:lvm}, which ensures that \((\mathbf{g}^{-1})_{1:n_{inv}}\) is well-defined. Furthermore, based on \Cref{asm:consistent}, we know that \(\tilde{\mathbf{z}}_{inv} = \mathbf{z}_{inv}\) almost surely across all positive pairs, guaranteeing that the first term on the right-hand side of \cref{eq:candidate_sub} equals \(0\).

Moreover, by the properties of the Darmois function \cite{von2021self}, the mapping \(\mathbf{d}\) transforms \(\mathbf{z}_{inv}\) into a uniform distribution over \((0, 1)^{n_{inv}}\). Since the uniform distribution is the maximum entropy distribution, which has an entropy of \(0\), this reduces the second term of \cref{eq:candidate_sub}. Therefore, we have demonstrated that \(\mathcal{L}_{\text{AlignMaxEnt}}\) has a global minimum at \(0\).

\smallskip
\paragraph{Step 2.} We now prove for any smooth function \(\mathbf{f}:\mathcal{X}\to (0,1)^{n_{inv}}\) that achieves the global minimum of \(\mathcal{L}_{\text{AlignMaxEnt}}\), it ensures an alignment condition. First, we define the following smooth mapping:
\[
\mathbf{r} := \mathbf{f}\circ \mathbf{g}.
\]
Since  \(\mathcal{L}_{\text{AlignMaxEnt}}\) in \cref{eq:loss_thm} has a global minimum at \(0\) and all of its terms are non-negative. Given \(\mathbf{f}\) achieve its minimum, we must have
\begin{gather}
     \mathbb{E}_{(\mathbf{x}, \tilde{\mathbf{x}})\sim p_{\mathbf{x}, \tilde{\mathbf{x}}}}\big[ \|\mathbf{r}(\mathbf{z}) - \mathbf{r}(\tilde{\mathbf{z}})\|_2 \big]  = 0,\label{eq:alignment} \\
     H\big(\mathbf{r}(\mathbf{z})\big) = 0. \label{eq:uniform}
\end{gather}
From \cref{eq:alignment}, it follows that 
\begin{equation}
\label{eq:align_condition}
\mathbf{r}(\mathbf{z}) = \mathbf{r}(\tilde{\mathbf{z}})\quad\text{almost surely}\quad\forall\,\mathbf{z}\sim p_{\mathbf{z}},\;\tilde{\mathbf{z}}\sim p_{\tilde{\mathbf{z}}\mid\mathbf{z}}.
\end{equation}
which we term the \emph{invariance condition}.

\smallskip
\paragraph{Step 3.} Now, we exclude the information of variant latent variables \(\mathbf{z}_{var}\) from the learned representations by contradiction.

Suppose, for the sake of contradiction, that there exists a function \(\mathbf{r}^c = \mathbf{f}^c \circ \mathbf{g}\) that depends on at least one component of the variant latent variables \(\mathbf{z}_{var}\). Formally,

\[
\begin{aligned}
\exists\; i \in \{n_{inv}+1,\dots,n\},\
(\mathbf{z}_{inv}^*,\mathbf{z}_{var}^*), \\
\text{such that}\quad 
\frac{\partial \mathbf{r}^c}{\partial z_{var,i}}
(\mathbf{z}_{inv}^*,\mathbf{z}_{var}^*) \neq 0.
\end{aligned}
\]

By the \(C^1\) continuity of \(\mathbf{r}^c\), guaranteed by the smoothness of both \(\mathbf{f}^c\) and \(\mathbf{g}\), there exists a sufficiently small \(\eta > 0\) such that the following inequality holds:

\begin{equation}
\label{eq:monocity}
\begin{gathered}
\mathbf{r}^c\big(\mathbf{z}^*_{inv}, (\mathbf{z}^*_{var\setminus\{i\}}, z_i^-)\big) \neq 
\mathbf{r}^c\big(\mathbf{z}^*_{inv}, (\mathbf{z}^*_{var\setminus\{i\}}, z_i^+)\big), \\
\forall\, z_i^- \in (z^*_i - \eta, z^*_i), \; z_i^+ \in (z_i^*, z_i^* + \eta).
\end{gathered}
\end{equation}

On the other hand, based on \Cref{asm:changes}, we know that \(p_{\tilde{\mathbf{z}}_{var} \mid \mathbf{z}_{var}}(\cdot \mid \mathbf{z}^*_{var})\) is smooth and strictly positive on \(\mathcal{O}(\mathbf{z}^*_{var})\). Thus, there exists a sufficiently small \(\eta_1 > 0\) such that
\[
\begin{gathered}
p_{\tilde{\mathbf{z}}_{var} \mid \mathbf{z}_{var}}(\tilde{\mathbf{z}}_{var} \mid \mathbf{z}^*_{var}) > 0, \\
\quad \forall \,\tilde{\mathbf{z}}_{var} \in \{\mathbf{z}^*_{var\setminus\{i\}}\} \times (z^*_{var,i} - \eta_1, z^*_{var,i} + \eta_1),
\end{gathered}
\]

with \(\{\mathbf{z}^*_{var\setminus\{i\}}\} \times (z^*_{var,i} - \eta_1, z^*_{var,i} + \eta_1) \subseteq \mathcal{O}(\mathbf{z}^*_{var})\). Furthermore, since \(p_\mathbf{z}\) is positive almost everywhere, so is \(p_{\mathbf{z}_{inv}}\). Therefore, we can find two distinct realizations of \(\tilde{\mathbf{z}}_{var}\) as the positive counterpart of \((\mathbf{z}^*_{inv}, \mathbf{z}^*_{var})\):
\[
\big(\mathbf{z}^*_{inv}, (\mathbf{z}^*_{var\setminus\{i\}}, z'_i)\big), \quad \big(\mathbf{z}^*_{inv}, (\mathbf{z}^*_{var\setminus\{i\}}, z''_i)\big),
\]
where
\[
z'_i \in (z_i^* - \eta_2, z_i^*), z''_i \in (z_i^*, z_i^* + \eta_2)\] 
with
\[\eta_2 = \min(\eta, \eta_1).
\]

Based on the invariance condition established in \cref{eq:align_condition}, we have the following equalities:
\[
\begin{aligned}
\mathbf{r}^c(\mathbf{z}^*_{inv}, \mathbf{z}^*_{var}) &= 
\mathbf{r}^c\big(\mathbf{z}^*_{inv}, (\mathbf{z}^*_{var\setminus\{i\}}, z'_i)\big) \\
&= \mathbf{r}^c\big(\mathbf{z}^*_{inv}, (\mathbf{z}^*_{var\setminus\{i\}}, z''_i)\big).
\end{aligned}
\]
However, this contradicts the inequality established in \cref{eq:monocity}. This contradiction indicates that such an \(\mathbf{r}^c\) cannot exist. Therefore, any \(\mathbf{r}\) minimizing the loss in \cref{eq:loss_thm} must be independent of the variant latent variables, formally:
\begin{equation}
\label{eq:inv_condition}
\mathbf{r}(\mathbf{z}) = \mathbf{r}(\mathbf{z}_{inv}) \qquad \text{almost surely} \qquad \forall\, \mathbf{z} \sim p_{\mathbf{z}}.
\end{equation}
This result implies that the learned representations contain only information about the invariant latent variables \(\mathbf{z}_{inv}\) for any input samples.

\smallskip
\paragraph{Step 4.} We define the learned representations as \(\hat{\mathbf{z}}\). Based on \cref{eq:inv_condition}, we have
\[
\hat{\mathbf{z}} = \mathbf{r}(\mathbf{z}_{inv}) \qquad \text{with} \qquad \hat{\mathbf{z}} \in (0, 1)^{n_{inv}}.
\] 
Furthermore, from \cref{eq:uniform}, we know that the learned representations are uniformly distributed over \((0, 1)^{n_{inv}}\). By directly applying Lemma B.2 in \cite{cai2025on}, we conclude that the learned representations contain \emph{all} and \emph{only} the information about the invariant latent variables \(\mathbf{z}_{inv}\) almost surely. In other words, \(\mathbf{z}_{inv}\) is block-identified by \(\mathbf{f}\) in the sense of \Cref{defn:blockid} when minimizing the loss. 

This completes the proof of \Cref{thm:idinv}.
\end{proof}

\subsection{Proof of \texorpdfstring{\Cref{cor:disentanglement}}{Cor. 3.1}}
\label{app:Proofs_cor}

In this section, we provide a detailed proof of \Cref{cor:disentanglement}. To begin, we restate the corollary for clarity.
\disentanglement*

\begin{proof}
We prove that minimizing the contrastive loss under Assumptions~\ref{asm:consistent} and~\ref{asm:changes} leads to structure-level disentanglement by establishing both the maximization of $I(\hat{\mathbf{z}}; \mathbf{z}_{inv})$ and the minimization of $I(\hat{\mathbf{z}}; \mathbf{z}_{var})$.

\paragraph{Part 1: Maximizing $I(\hat{\mathbf{z}}; \mathbf{z}_{inv})$.}
From Theorem~\ref{thm:idinv}, we know that minimizing the contrastive loss ensures block-identifiability of $\mathbf{z}_{inv}$, meaning there exists an invertible function $\mathbf{r}$ such that $\hat{\mathbf{z}} = \mathbf{r}(\mathbf{z}_{inv})$. 

Since $\mathbf{r}$ is invertible, we have:
\begin{equation}
    I(\hat{\mathbf{z}}; \mathbf{z}_{inv}) = H(\mathbf{z}_{inv}) - H(\mathbf{z}_{inv}|\hat{\mathbf{z}}).
\end{equation}
The invertibility of $\mathbf{r}$ implies that $\mathbf{z}_{inv}$ is fully determined by $\hat{\mathbf{z}}$, hence $H(\mathbf{z}_{inv}|\hat{\mathbf{z}}) = 0$. Therefore:
\begin{equation}
    I(\hat{\mathbf{z}}; \mathbf{z}_{inv}) = H(\mathbf{z}_{inv}),
\end{equation}
which is the maximum possible mutual information, achieved when $\hat{\mathbf{z}}$ contains all information about $\mathbf{z}_{inv}$.

\paragraph{Part 2: Minimizing $I(\hat{\mathbf{z}}; \mathbf{z}_{var})$.}
By the block-identifiability result in Definition~\ref{defn:blockid}, the learned representation $\hat{\mathbf{z}}$ contains all and only the information about $\mathbf{z}_{inv}$. This means that $\hat{\mathbf{z}}$ is conditionally independent of $\mathbf{z}_{var}$ given $\mathbf{z}_{inv}$:
\begin{equation}
    \hat{\mathbf{z}} \perp \mathbf{z}_{var} \mid \mathbf{z}_{inv}.
\end{equation}

Using the chain rule for mutual information:
\begin{equation}
    I(\hat{\mathbf{z}}; \mathbf{z}_{var}) = I(\hat{\mathbf{z}}; \mathbf{z}_{var}, \mathbf{z}_{inv}) - I(\hat{\mathbf{z}}; \mathbf{z}_{inv}).
\end{equation}

Since $\hat{\mathbf{z}} = \mathbf{r}(\mathbf{z}_{inv})$ is a deterministic function of $\mathbf{z}_{inv}$ only, we have:
\begin{equation}
    I(\hat{\mathbf{z}}; \mathbf{z}_{var}, \mathbf{z}_{inv}) = I(\hat{\mathbf{z}}; \mathbf{z}_{inv}) = H(\mathbf{z}_{inv}).
\end{equation}

Therefore:
\begin{equation}
    I(\hat{\mathbf{z}}; \mathbf{z}_{var}) = H(\mathbf{z}_{inv}) - H(\mathbf{z}_{inv}) = 0.
\end{equation}

This confirms that the optimal embeddings achieve structure-level disentanglement by maximizing information about invariant factors while eliminating information about variant topology.
\end{proof}

\subsection{Proof of \texorpdfstring{\Cref{prop:bound}}{Prop. 3.1}}
\label{app:Proofs_prop}

In this section, we provide a detailed proof of \Cref{prop:bound}. To begin, we restate the corollary for clarity.
\bound*

\begin{proof}
We derive an upper bound on the disentanglement error by analyzing the optimization landscape of the contrastive loss and its relationship to the geometric properties of the embedding space.

\paragraph{Step 1: Relate the loss to representation error.}
Let $\mathbf{z}^* = A\mathbf{z}_{inv}$ denote the optimal linear transformation of the true invariant factors. The contrastive loss can be decomposed as:
\begin{equation}
    \mathcal{L} = \mathbb{E}_{(\mathbf{x}, \tilde{\mathbf{x}})} \left[ -\log \frac{\exp(\mathrm{sim}(\hat{\mathbf{z}}, \tilde{\hat{\mathbf{z}}})/\tau)}{\sum_{\mathbf{z}' \neq \hat{\mathbf{z}}} \exp(\mathrm{sim}(\hat{\mathbf{z}}, \mathbf{z}')/\tau)} \right].
\end{equation}

For positive pairs satisfying Assumption~\ref{asm:consistent}, we have $\mathbf{z}_{inv} = \tilde{\mathbf{z}}_{inv}$, which implies $\mathbf{z}^* = \tilde{\mathbf{z}}^*$. The similarity between learned representations can be expressed as:
\begin{equation}
    \mathrm{sim}(\hat{\mathbf{z}}, \tilde{\hat{\mathbf{z}}}) = \mathrm{sim}(\mathbf{z}^*, \tilde{\mathbf{z}}^*) - \|\hat{\mathbf{z}} - \mathbf{z}^*\|^2 - \|\tilde{\hat{\mathbf{z}}} - \tilde{\mathbf{z}}^*\|^2 + o(\|\hat{\mathbf{z}} - \mathbf{z}^*\|^2),
\end{equation}
where we use the fact that for normalized vectors, $\langle \mathbf{u}, \mathbf{v} \rangle = 1 - \frac{1}{2}\|\mathbf{u} - \mathbf{v}\|^2 + o(\|\mathbf{u} - \mathbf{v}\|^2)$.

\paragraph{Step 2: Upper bound via first-order optimality.}
At the optimum, the gradient of the expected loss with respect to $\hat{\mathbf{z}}$ vanishes. By Taylor expansion around the optimal $\mathbf{z}^*$:
\begin{equation}
    \mathbb{E}[\mathcal{L}] \geq \mathbb{E}[\mathcal{L}]|_{\hat{\mathbf{z}} = \mathbf{z}^*} + \frac{\lambda_{\min}}{2} \mathbb{E}[\|\hat{\mathbf{z}} - \mathbf{z}^*\|^2],
\end{equation}
where $\lambda_{\min}$ is the minimum eigenvalue of the Hessian of the loss function. 

Rearranging:
\begin{equation}
    \mathbb{E}[\|\hat{\mathbf{z}} - \mathbf{z}^*\|^2] \leq \frac{2}{\lambda_{\min}} (\mathbb{E}[\mathcal{L}] - \mathbb{E}[\mathcal{L}]|_{\mathbf{z}^*}) \leq \frac{2\epsilon}{\lambda_{\min}}.
\end{equation}

\paragraph{Step 3: Connect curvature to margin.}
The minimum eigenvalue $\lambda_{\min}$ of the loss Hessian is related to the geometry of the class boundaries. Specifically, for points near the decision boundary $\partial \mathcal{Z}_{inv}$, the curvature of the loss surface is inversely proportional to the margin:
\begin{equation}
    \lambda_{\min} \geq c \cdot \text{margin}(\partial \mathcal{Z}_{inv}),
\end{equation}
for some constant $c > 0$ that depends on the smoothness of the embedding function $\mathbf{f}$ and the data distribution.

This relationship arises because when the margin is small, small perturbations in the embedding can lead to misclassification, requiring the loss to be nearly flat (small curvature) in directions perpendicular to the boundary. Conversely, a large margin allows for steeper loss gradients, corresponding to larger eigenvalues.

\paragraph{Step 4: Combine bounds.}
Substituting the margin-curvature relationship:
\begin{equation}
    \mathbb{E}[\|\hat{\mathbf{z}} - \mathbf{z}^*\|^2] \leq \frac{2\epsilon}{c \cdot \text{margin}(\partial \mathcal{Z}_{inv})}.
\end{equation}

Taking the square root and applying Jensen's inequality:
\begin{equation}
    \inf_{A} \mathbb{E}[\|\hat{\mathbf{z}} - A\mathbf{z}_{inv}\|^2] \leq \mathcal{O}(\sqrt{\epsilon}) + \mathcal{O}\left(\frac{1}{\text{margin}(\partial \mathcal{Z}_{inv})}\right),
\end{equation}
where the $\mathcal{O}(\sqrt{\epsilon})$ term captures the optimization error and the margin-dependent term reflects the geometric complexity of the class boundary.

This completes the proof, showing that the disentanglement error is bounded by both the optimization quality (through $\epsilon$) and the geometric separability (through the margin).
\end{proof}

\section{Algorithm of BES and Complexity Analysis}
\label{app:alg_bes}

\begin{algorithm}[h]
   \caption{Boundary Embedding Shaping}
   \label{alg: Boundary Embedding Shaping}
\begin{algorithmic}[1]
\STATE {\bfseries Input:} Node embeddings $\Phi(\mathbf{X})$, Number of layers $L$, Iterations $Q$, Batch size $\beta$, Learning rate $\eta$, Scaling factor $\alpha$
\STATE {\bfseries Output:} Refined node embeddings $\hat{\mathbf{Z}}^{(B)}$
\STATE Initialize $\hat{\mathbf{Z}}^{(B)} \leftarrow \Phi(\mathbf{X})$
\FOR{$l = 1$ to $L$}
    \STATE Estimate cluster parameters $\boldsymbol{\mu}_m, \boldsymbol{\Sigma}_m$ using $\hat{\mathbf{Z}}$
    \STATE Identify boundary region $\border$ via \Cref{equ:boundar_area}
    \STATE Compute shift scores $S(v)$ for $v \in \border$ via \Cref{equ:sc}
    \STATE Select boundary nodes $\mathcal{B} = \{v \in \border \mid S(v) > 0.5\}$
    \STATE Initialize attention parameters $\param^{(B)}_l$
    \FOR{$t = 1$ to $Q$}
        \STATE Sample batch $\mathcal{V}_{batch}$ of size $\beta$ from $\mathcal{B}$
        \STATE Compute gradient $\nabla_{\param^{(B)}_l} \mathcal{L}$ using $\mathcal{V}_{batch}$
        \STATE Virtual update: $\param'^{(B)}_l \leftarrow \param^{(B)}_l + \eta \nabla_{\param^{(B)}_l} \mathcal{L}$
        \STATE Compute perturbation $\Delta^{(B)}$
        \STATE Adaptive update: $\param^{(B)}_l \leftarrow \param^{(B)}_l + \frac{\alpha}{\Delta^{(B)}} \eta \nabla_{\param^{(B)}_l} \mathcal{L}$
    \ENDFOR
    \STATE Update embeddings: $\hat{\mathbf{Z}}^{(B)} \leftarrow \att^{(B)}_l(\hat{\mathbf{Z}}^{(B)})$
\ENDFOR
\STATE \textbf{return} $\hat{\mathbf{Z}}^{(B)}$

\end{algorithmic}
\end{algorithm}

In this section, we provide a detailed analysis of the time and space complexity of Boundary Embedding Shaping (BES), as described in \Cref{alg: Boundary Embedding Shaping}. Let $N$ be the number of nodes, $D$ be the embedding dimension, $C$ be the number of classes, $K$ be the average node degree (or neighborhood size), and $Q$ be the number of iterations per layer $L$.

\textbf{Time Complexity.} The BES algorithm proceeds layer-by-layer. Within each layer:
\begin{enumerate}
    \item \emph{Gaussian Parameter Estimation}: Computing the class centers and covariance matrices takes $\mathcal{O}(ND^2 + C D^3)$ time in the worst case (using full covariance inference).
    \item \emph{Boundary Identification}: Evaluating \Cref{equ:boundar_area} involves projecting embeddings onto the covariance-scaled center differences, taking $\mathcal{O}(N C D)$ time.
    \item \emph{Shift Score Computation}: Computing shift score $S(v)$ via k-nearest neighbors in the embedding space takes $\mathcal{O}(N D \log N)$ time using tree-based spatial indexing.
    \item \emph{Attention Training and Adaptive Update}: For each of the $Q$ iterations, we sample a batch to compute the loss gradient, taking $\mathcal{O}(\beta K D^2)$. However, the most computationally expensive operation is evaluating the global perturbation $\Delta^{(B)}$, which requires performing the attention layer update $\mathcal{A}^{(B)}$ over \emph{all} nodes to calculate the total variation. This takes $\mathcal{O}(N K D^2)$ time per iteration. Thus, the inner loop bounds the layer-wise time complexity at $\mathcal{O}(Q N K D^2)$.
\end{enumerate}
Consequently, the overall time complexity of BES is dominated by the iterative adaptive update step, leading to $\mathcal{O}(L \cdot Q N K D^2)$.
In practical implementation, the computation of the global perturbation $\Delta^{(B)}$ can be efficiently accelerated through GPU-parallelized Sparse Dense Matrix Multiplication (SpMM) and distributed graph partitioning, enabling scalable processing on large-scale graphs.

\textbf{Space Complexity.} BES explicitly stores the node embeddings $\mathcal{O}(N D)$, the sparse graph structure $\mathcal{O}(N K)$, the parameters for the Gaussian distributions $\mathcal{O}(C D^2)$, and the trainable parameters for the attention layer $\mathcal{O}(D^2)$. Thus, the overall space complexity is $\mathcal{O}(N D + N K + C D^2)$, which scales linearly with the graph size, demonstrating memory efficiency comparable to standard message-passing networks.

\section{Additional Experimental Details}\label{app:Additional Experimental Details}

\subsection{Model Details}
\label{app:model_details}

We use pre-trained GNN-based encoders $\Phi$ to obtain the initial graph data features $\Phi (\mathbf{X})$.
Existing GNN-based encoders use different aggregation ($T$) and update ($U$) functions, and the selection of $T_{1}$, \ldots, $ T_{K}$ and $ U_{1}$, \ldots, $U_{K}$ is ideal for obtaining comprehensive embeddings $\Phi_{i}$ of node $v_i$ from different views.

Each GNN encoder $\Phi_k$ with $L$ layers encodes the node representation through the message-passing process at each layer, which involves neighbor message aggregation and node message update.
The neighbor message aggregation centered at each node $v_i \in \mathcal{V}$, where an aggregation function $T_k$ aggregates information from $v_i$'s neighbors $\neighbor{v_i}$ to obtain the aggregated message $\mathbf{m}_{ik}^{(l)}$.
Neighbor nodes $\neighbor{v_i}$ of $v_i$ are:
\begin{equation}
    \neighbor{v_i} = \{v_j\}_{e_{ji} \in \mathcal{E}}.
\end{equation}
Thus, the aggregated message $\mathbf{m}_{ik}^{(l)}$ can be obtained by the aggregation function $T_k$:
\begin{equation}
    \mathbf{m}_{ik}^{(l)} = T_k^{(l)}\left(\{ \Phi_{jk}^{(l)} (\mathbf{x}_j) : v_j \in \neighbor{v_i} \} \right),
\end{equation}
where $ \Phi_{jk}^{(l)} (\mathbf{x}_j) \in \mathbb{R}^D $ is the embedding of neighbor node $v_j$ at layer $l$ of $k$-th GNN encoder, and its initial value $ \Phi_{jk}^{(0)} (\mathbf{x}_j) = \mathbf{x}_j \in \mathbf{X}$.
After that, the embedding $\Phi_{ik}^{(l+1)} (\mathbf{x}_i)$ of node $v_i$ is updated by the update function $U_k$ based on the aggregated message $\mathbf{m}_{ik}^{(l)}$:
\begin{equation}
    \Phi_{ik}^{(l+1)} (\mathbf{x}_i) = U_k^{(l)}\left( \Phi_{i}^{(l)} (\mathbf{x}_i), \mathbf{m}_{ik}^{(l)}\right).
\end{equation}
So we obtain the embedding $ \Phi_{jk} (\mathbf{x}) = \Phi_{jk}^{(L)} (\mathbf{x}) $ for node $v_i$ by encoder $k$. 

By the combination of feature extraction using different GNN-based encoders, we obtain the comprehensive node embeddings from different views, denoted as $\Phi (\mathbf{X})$:
\begin{equation}
\Phi (\mathbf{X}) = \text{concat} \left( \Phi_1 (\mathbf{X}), \dots, \Phi_K (\mathbf{X}) \right) ,
\end{equation}
which supports the shaping of node embeddings.

\begin{table*}[t]
\caption{Dataset statistics.}
\label{tab:dataset-table}
\setlength{\tabcolsep}{2.8pt}
\renewcommand{\arraystretch}{.9}
\begin{center}
\begin{tabular}{lccccc}
\toprule
Dataset & Nodes & Edges  & Feature Dim. & Node Classes \\ 
\midrule
WikiCS \cite{mernyei2020wiki} & 11,701 & 216,123  & 300 & 10 \\
Cora \cite{mccallum2000Cora} & 2,708 & 5,429  & 1,433 & 7 \\
Pubmed \cite{Sen2008pubmed} & 19,717 & 44,338  & 500 & 3 \\
CiteSeer \cite{giles1998citeseer} & 3,327 & 4,732 & 3,703 & 6 \\
\midrule
Chameleon \cite{pei2020geomgcn} & 2,277 & 36,101 & 2,325 & 5 \\
Cornell \cite{pei2020geomgcn} & 183 & 298 & 1,703 & 5 \\
Texas \cite{pei2020geomgcn} & 183 & 309 & 1,703 & 5 \\
\midrule
Roman-empire \cite{platonov2023critical} & 22,662 & 32,927 & 300 & 18 \\
\midrule
ogbn-arxiv \cite{hu2020open} & 169,343 & 1,166,243 & 128 & 40 \\
ogbl-collab \cite{hu2020open} & 235,868 & 1,285,465 & 128 & -- \\
\bottomrule
\end{tabular}
\end{center}
\end{table*}

Next, based on the initial node embeddings, we use two attention layers $\att^{(B)}_1$ and $\att^{(B)}_2$, which are trained with gravity loss separately, with residual connections to learn the disentangled features of boundary nodes.
The attention layer is parameterized by $\param$, which computes the relationship between each position from different embeddings $\Phi (\mathbf{x}_i)$ for node $v_i$, and generates refined embeddings based on these relationships.

Let the input of $\att^{(B)}_1$ be $\Phi (\mathbf{X}) \in \mathbb{R}^{N \times D}$, and the input of $\att^{(B)}_2$ is the output of $\att^{(B)}_1$. 
For each attention head $a$ of each attention layer $\att$, 
\begin{equation}
    \att_a(\Phi (\mathbf{X})) = \text{softmax}\left( \frac{Q_a K_a^\top}{\sqrt{D_a}} \right) V_a,
\end{equation}
where $Q_a = \Phi (\mathbf{X}) \cdot W_{Q_a}$, $K_a = \Phi (\mathbf{X}) \cdot W_{K_a}$, $V_a = \Phi (\mathbf{X}) \cdot W_{V_a}$ are the query, key, and value functions of the attention layer, and $D_a$ is the dimension of the embeddings.
The outputs from all heads are concatenated to form a vector with a dimension identical to $\Phi (\mathbf{X})$, followed by a linear transformation and residual connection to produce the refined node embeddings for the boundary embedding shaping:

\begin{equation}
\begin{split}
\hat{\mathbf{z}}^{(B)}
&= \att^{(B)}\bigl(\Phi(X)\bigr) \\
&= \mathrm{concat}\!\Bigl(
        \att^{(B)}_1\!\bigl(\Phi(\mathbf{X})\bigr),\;
        \dots,\;
        \att^{(B)}_a\!\bigl(\Phi(\mathbf{X})\bigr)
   \Bigr) W_{out} \\
&\quad + \Phi(\mathbf{X}).
\end{split}
\end{equation}

The Chebyshev convolution \cite{DBLP:conf/nips/DefferrardBV16} is adopted as the decoder $\Psi$ in our experiments due to its effectiveness in capturing local graph topology. 
This decoder is applied to the refined embeddings produced by our attention layers for node classification and link prediction.

\begin{table*}[ht]
\centering
\caption{Hyperparameter settings for baselines}
\label{tab:gnn_hyperparameters}
\setlength{\tabcolsep}{2.8pt}
\renewcommand{\arraystretch}{.9}
\begin{tabular}{lcccccc}
\toprule
Model & lr & Epoch & Optimizer & Dropout & Weight Decay & Implementation \\
\midrule
KGNN     & 0.01  & 200 & adam  & 0.5 & 0.0005 & torch\_geometric \\
GCN   & 0.01  & 200 & adam  & 0.5 & 0.0005 & torch\_geometric \\
Cheb     & 0.01  & 200 & adam  & 0.5 & 0.0005 & torch\_geometric \\
GAT    & 0.01  & 200 & adam  & 0.5 & 0.0005 & torch\_geometric \\
SGC   & 0.01  & 200 & adam  & 0.5 & 0.0005 & torch\_geometric \\
SSGC    & 0.01  & 200 & adam  & 0.5 & 0.0005 & torch\_geometric \\
GraphSAGE     & 0.01  & 200 & adam  & 0.5 & 0.0005 & torch\_geometric \\
GATv2    & 0.01  & 200 & adam  & 0.5 & 0.0005 & torch\_geometric \\
H2GCN   & 0.01  & 500 & adam  & 0.5 & 0.0005 & official GitHub repo. \\
SGNN    & 0.001 &     & adam  &     &      & official GitHub repo.   \\
NodeImport-GCN & 0.01 & 500 & adam  &     &   0.0   & official GitHub repo.   \\
NodeImport-GAT& 0.01 &500 & adam  &     &   0.0   & official GitHub repo.   \\
NodeImport-SAGE& 0.01 & 500 & adam  &     &  0.0    & official GitHub repo.   \\
MotifRGC-GCN & 0.01 & 200& adam  &     &   0.0   & official GitHub repo.   \\
MotifRGC-GAT& 0.01 &200 & adam  &     &   0.0   & official GitHub repo.   \\
MotifRGC-SAGE& 0.01 & 200 & adam  &     &  0.0    & official GitHub repo.   \\
IGCL     & 0.005 &100 & adam &     &0.000001 & official GitHub repo. \\
OURS  & 0.01&200&adam&0.5&0.0005& \\

\bottomrule
\end{tabular}
\end{table*}

\begin{table*}[t]
\centering
\caption{Hyperparameter settings for MPNN variants across different datasets}
\label{tab:mpnn_gnn_hyperparameters}
\setlength{\tabcolsep}{2.8pt}
\renewcommand{\arraystretch}{.9}
\begin{tabular}{llccccccccc}
\toprule
Hyperparameter & GNN &
WikiCS & Cora & CiteSeer & Pubmed &
Chameleon & Cornell & Texas &
Roman-empire & ogbn-arxiv \\
\midrule

\multirow{3}{*}{Hidden Channels}
& GCN  & 256 & 512 & 512 & 256 & 512 & 256 & 256 & 256 & 256 \\
& SAGE & 256 & 256 & 512 & 512 & 512 & 512 & 512 & 512 & 512 \\
& GAT  & 512 & 512 & 256 & 512 & 512 & 512 & 512 & 512 & 512 \\
\midrule

\multirow{3}{*}{Learning Rate}
& GCN  & 0.001 & 0.001 & 0.001 & 0.005 & 0.005 & 0.005 & 0.005 & 0.005 & 0.005 \\
& SAGE & 0.001 & 0.001 & 0.001 & 0.005 & 0.005 & 0.005 & 0.005 & 0.005 & 0.005 \\
& GAT  & 0.001 & 0.001 & 0.001 & 0.01  & 0.001 & 0.001 & 0.001 & 0.001 & 0.001 \\
\midrule

\multirow{3}{*}{Layers}
& GCN  & 3 & 3 & 2 & 2 & 3 & 3 & 3 & 3 & 3 \\
& SAGE & 2 & 3 & 3 & 4 & 4 & 4 & 4 & 4 & 4 \\
& GAT  & 2 & 3 & 3 & 2 & 3 & 3 & 3 & 3 & 3 \\
\midrule

\multirow{3}{*}{Weight Decay}
& GCN  & 0.0 & 5e-4 & 0.01 & 5e-4 & 5e-4 & 5e-4 & 5e-4 & 5e-4 & 5e-4 \\
& SAGE & 0.0 & 5e-4 & 0.01 & 5e-4 & 5e-4 & 5e-4 & 5e-4 & 5e-4 & 5e-4 \\
& GAT  & 0.0 & 5e-4 & 0.01 & 5e-4 & 5e-4 & 5e-4 & 5e-4 & 5e-4 & 5e-4 \\
\bottomrule
\end{tabular}
\end{table*}

\begin{table*}[t]
\centering
\caption{Hyperparameter settings for GraphECL across different datasets}
\label{tab:graphecl_hyperparameters}
\setlength{\tabcolsep}{2.8pt}
\renewcommand{\arraystretch}{.9}
\begin{tabular}{l l c c c c c c c}
\toprule
Dataset  & Epochs & lr1 & lr2 & wd1 & wd2 & Hidden Dim & Temp \\
\midrule
{WikiCS}   & 1000 & 5e-4 & 1e-2 & 1e-6 & 1e-5 & 1024 & 0.5 \\
\midrule
{Cora}     & 100  & 5e-4 & 1e-2 & 1e-6 & 1e-5 & 1024 & 0.5 \\
\midrule
{CiteSeer}  & 100  & 1e-3 & 1e-2 & 1e-5 & 1e-2 & 2048 & 0.5 \\
\midrule
{Pubmed}   & 100  & 5e-4 & 1e-2 & 0    & 1e-4 & 1024 & 0.5 \\

\bottomrule
\end{tabular}
\end{table*}

\begin{table*}[t]
    \caption{Accuracy ($\uparrow$) using graph topology vs embedding space neighbors. The 1st (bold) results are highlighted.}
    \label{tab:sc_topo_emb}
    \centering
    \setlength{\tabcolsep}{2.8pt}
    \renewcommand{\arraystretch}{.9}
    \begin{tabular}{lccccc}
    \toprule
    Settings & Epochs & Cora & Citeseer & Pubmed & WikiCS \\
    \midrule
    \multicolumn{6}{c}{\textbf{Node Classification}} \\
    \midrule
    Graph Topology  & 200 & 89.40$\pm$0.20 & 78.18$\pm$0.09 & \textbf{91.09$\pm$0.06} & 85.28$\pm$0.04 \\
    Embedding Space & 200 & \textbf{89.40$\pm$0.17} & \textbf{78.20$\pm$0.06} & 91.03$\pm$0.03 & \textbf{85.30$\pm$0.15} \\
    \midrule
    \multicolumn{6}{c}{\textbf{Link Prediction}} \\
    \midrule
    Graph Topology  & 200 & \textbf{94.92$\pm$0.01} & 95.65$\pm$0.09 & 97.76$\pm$0.04 & 92.24$\pm$0.08 \\
    Embedding Space & 200 & 94.92$\pm$0.03 & \textbf{95.66$\pm$0.06} & \textbf{97.80$\pm$0.02} & \textbf{92.24$\pm$0.04} \\
    \bottomrule
    \end{tabular}
\end{table*}

\subsection{Datasets}\label{app:Datasets}
We use the following datasets, shown in \Cref{tab:dataset-table}, to evaluate different methods comprehensively.
\begin{itemize} 
    \item WikiCS \cite{mernyei2020wiki}. The WikiCS dataset, derived from Wikipedia categories, is designed for benchmarking Graph Neural Networks. It comprises 10 classes related to computer science branches, with high connectivity. Node features are 300-dimensional vectors, calculated as averages of pre-trained GloVe word embeddings \cite{pennington2014glove} from the article.
    \item Cora \cite{mccallum2000Cora}. The Cora dataset includes 2,708 publications across seven classes, with 5,429 citation links. Each publication is represented by a 0/1 word vector based on a dictionary of 1,433 unique words.
    \item CiteSeer\cite{giles1998citeseer}. The CiteSeer dataset contains 3,312 publications in six classes, with 4,732 citation links. Each publication is described by a 0/1 word vector from a 3,703-word dictionary.
    \item PubMed\cite{Sen2008pubmed} . The PubMed dataset contains 19,717 diabetes-related publications, divided into three classes. It features a citation network with 44,338 links. Each publication is represented by a TF-IDF weighted word vector from a 500-word dictionary.
    \item Chameleon \cite{pei2020geomgcn}. The Chameleon dataset is a Wikipedia page-page network where nodes represent articles and edges reflect mutual hyperlinks. It comprises 2,277 nodes, 36,101 edges, and 5 node classes. Nodes are classified by average monthly traffic, and node features indicate the presence of informative nouns in the article content.
    \item Cornell \cite{pei2020geomgcn}. The Cornell dataset is part of the WebKB collection, consisting of web pages from Cornell University. It includes 183 nodes, 298 edges, and 5 node classes (student, project, course, staff, and faculty). Node features are bag-of-words vectors of the web page content.
    \item Texas \cite{pei2020geomgcn}. The Texas dataset is part of the WebKB collection, consisting of web pages from the University of Texas at Austin. It contains 183 nodes, 309 edges, and 5 node classes (student, project, course, staff, and faculty). Node features are bag-of-words vectors of the web page content.
    \item Roman-empire \cite{platonov2023critical}. The Roman-empire dataset is constructed from the English Wikipedia article ``Roman Empire,'' where each node represents a word in the article and edges encode sequential order or syntactic dependencies. It includes 22,662 nodes, 32,927 edges, and 18 node classes corresponding to syntactic roles. Node features are 300-dimensional pre-trained word embeddings.
    \item ogbn-arxiv \cite{hu2020open}. The ogbn-arxiv dataset is a directed citation network of CS papers indexed on arXiv. It contains 169,343 nodes, 1,166,243 edges, and 40 node classes corresponding to CS subject areas. Node features are 128-dimensional embeddings obtained by averaging skip-gram word embeddings of each paper's title and abstract.
    \item ogbl-collab \cite{hu2020open}. The ogbl-collab dataset is an undirected author collaboration network, where nodes represent authors and edges represent co-authorship relations. It contains 235,868 nodes and 1,285,465 edges. Node features are 128-dimensional embeddings obtained by averaging the word embeddings of each author's published papers. This dataset is used for link prediction.
\end{itemize}

\subsection{Experimental Environment}\label{app:Environment}
All experiments on different datasets were conducted on a server equipped with NVIDIA A40 GPUs (48 GB memory per GPU), a 64-core Intel Xeon Silver 4216 CPU, and 256 GB of system memory.
The software implementation is based on Python 3.9.18, PyTorch 2.0.0, CUDA 11.7, and PyTorch Geometric 2.4.0.

\subsection{Hyperparameter Settings}\label{app:Hyperparameter Settings}

\Cref{tab:gnn_hyperparameters} provides an overview of the hyperparameter settings for various Graph Neural Network (GNN) models in our experiments. 
We also separately give the detailed hyperparameter settings for MPNN-GNN and GraphECL across different datasets in \Cref{tab:mpnn_gnn_hyperparameters} and \Cref{tab:graphecl_hyperparameters}, respectively.
These settings include learning rate, number of epochs of training, optimizers used, dropout rate, weight decay, and the types of implementation.

\section{Additional Experimental Results}\label{app:Additional Experimental Results}

\subsection{Shift Score via Graph Topology vs. Embedding Space Neighbors}\label{app:variability estimation}
How shift score calculations affect node classification and link prediction performance is shown in \Cref{tab:sc_topo_emb}.
From the results, we can see that selecting neighbors in the embedding space rather than using graph topology yields slightly better performance.
Notably, although the shift score is computed in the embedding space, its objective remains firmly anchored in graph topology, as boundary nodes' local neighborhoods are heavily influenced by heterophilic topological structures, showing similar topological shift scores across different graphs.
Moreover, we observe that nodes with the noisiest local topology are not always located precisely on the decision boundary. 
Updating such nodes does not necessarily help the GNN classifier better recognize nodes with overlapping embeddings.
In addition, regardless of the boundary node selection strategy, the pre-update mechanism in \Cref{sec:adaptive_learning} will help rescale potentially harmful large updates caused by mis-selected nodes under the contrastive objective. Therefore, we compute the shift score in the embedding space.

\subsection{Exact Pairwise Objective vs. Center-based Approximation}\label{app:similarity_functions}

We initially attempted to optimize the exact pairwise contrastive loss (\Cref{thm:idinv}) by randomly sampling independent pairs. However, this yielded extensive computational overhead and chaotic convergence.
Inspired by physical mean-field theories, we resolved this bottleneck by utilizing averaged group effects to approximate interactions, a strategy adopted by PCL \cite{li2021prototypical} from the contrastive learning domain. By operating on cluster centers instead of node instances, theoretical properties (identifiability, information bounds) established for individual pairs efficiently remain preserved through geometric identity, as described in \Cref{sec:efficient_approximation}.
Empirically, this completely stabilizes convergence, accelerates training, and maximizes sample efficiency (\Cref{tab:acc_nodes_centers} and \Cref{tab:convergence_nodes_centers}).

\begin{table*}[t]
    \caption{Accuracy ($\uparrow$) using pairwise objective vs center-based approximation. The 1st (bold) results are highlighted to indicate their rankings.}
    \label{tab:acc_nodes_centers}
    \centering
    \setlength{\tabcolsep}{2.8pt}
    \renewcommand{\arraystretch}{.9}
    \begin{tabular}{lccccc}
    \toprule
    Settings & epochs & Cora & Citeseer & Pubmed & WikiCS \\
    \midrule
    \multicolumn{6}{c}{\textbf{Node Classification}} \\
    \midrule
    Pairwise Objective & 250 & \textbf{89.42$\pm$0.21} & \textbf{78.38$\pm$0.19} & 90.99$\pm$0.07 & 85.22$\pm$0.12 \\
    Center-based Approximation & 200 & 89.40$\pm$0.17 & 78.20$\pm$0.06 & \textbf{91.03$\pm$0.03} & \textbf{85.30$\pm$0.15} \\
    \midrule
    \multicolumn{6}{c}{\textbf{Link Prediction}} \\
    \midrule
    Pairwise Objective & 250 & 94.89$\pm$0.11 & 95.59$\pm$0.09 & \textbf{97.81$\pm$0.14} & 92.21$\pm$0.16 \\
Center-based Approximation & 200 & \textbf{94.92$\pm$0.03} & \textbf{95.66$\pm$0.06} & 97.80$\pm$0.02 & \textbf{92.24$\pm$0.04} \\
\bottomrule
    \end{tabular}
\end{table*}

\begin{table*}[t]
\centering
\caption{Convergence time in seconds ($\downarrow$) using pairwise objective vs center-based approximation. The 1st (bold) results are highlighted to indicate their rankings.}
\label{tab:convergence_nodes_centers}
\setlength{\tabcolsep}{2.8pt}
\renewcommand{\arraystretch}{.9}
\begin{tabular}{lcccc}
\toprule
Settings & Cora & Citeseer & Pubmed & WikiCS \\
\midrule
\multicolumn{5}{c}{\textbf{Node Classification}} \\
\midrule
Pairwise Objective & $\sim$180 & $\sim$140 & $\sim$1500 & $\sim$1500 \\
Center-based Approximation & \textbf{$\sim$160} & \textbf{$\sim$110} & \textbf{$\sim$950} & \textbf{$\sim$1100} \\
\midrule
\multicolumn{5}{c}{\textbf{Link Prediction}} \\
\midrule
Pairwise Objective & $\sim$250 & $\sim$200 & $\sim$1800 & $\sim$450 \\
Center-based Approximation & \textbf{$\sim$220} & \textbf{$\sim$190} & \textbf{$\sim$1500} & \textbf{$\sim$400} \\
\bottomrule
\end{tabular}
\end{table*}

\begin{figure*}[t]
    \centering
    \includegraphics[width=\linewidth]{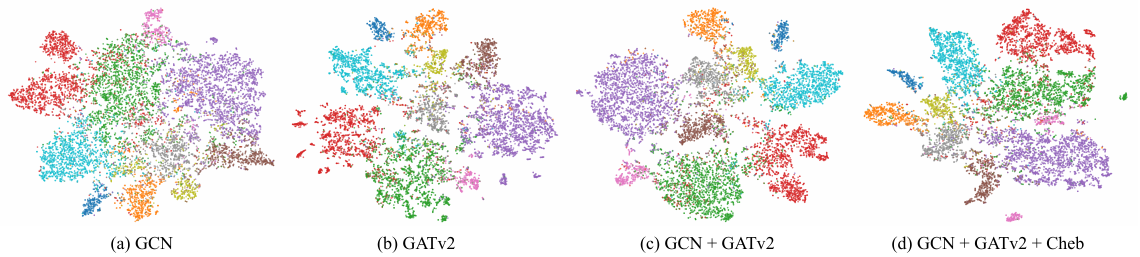}
    \caption{
        T-SNE visualization of node embeddings generated by different graph encoders and their combinations.
    }
    \label{fig:exp_encoder}
\end{figure*}

\begin{table*}[t]
    \caption{F1 ($\uparrow$) and AUC ($\uparrow$) of Node Classification on Different Datasets. OOM denotes out of memory. The 1st (bold) and 2nd (underlined) results are highlighted to indicate their rankings.}
    \label{tab:auc-f1-nc-table}
    \centering
    \setlength{\tabcolsep}{2.8pt}
    \renewcommand{\arraystretch}{.9}
    \begin{tabular}{l cc cc cc cc}
        \toprule
        & \multicolumn{2}{c}{Cora} & \multicolumn{2}{c}{CiteSeer}
        & \multicolumn{2}{c}{Pubmed} & \multicolumn{2}{c}{WikiCS} \\
        \cmidrule(lr){2-3} \cmidrule(lr){4-5} \cmidrule(lr){6-7} \cmidrule(lr){8-9}
        Method & F1 & AUC & F1 & AUC & F1 & AUC & F1 & AUC\\
        \midrule

        KGNN &
        85.87$\pm$0.57 & 97.40$\pm$0.12 &
        71.93$\pm$0.31 & 92.70$\pm$0.02 &
        89.24$\pm$0.03 & 96.87$\pm$0.04 &
        15.24$\pm$7.41 & 63.71$\pm$4.68 \\

        GCN &
        86.38$\pm$0.40 & 97.81$\pm$0.02 &
        72.26$\pm$0.62 & 92.60$\pm$0.24 &
        86.99$\pm$0.20 & 96.56$\pm$0.02 &
        77.70$\pm$0.48 & 96.60$\pm$0.18 \\

        Cheb &
        85.95$\pm$0.69 & 97.74$\pm$0.05 &
        71.03$\pm$1.12 & 92.39$\pm$0.29 &
        89.61$\pm$0.15 & 97.37$\pm$0.03 &
        79.77$\pm$0.33 & 97.24$\pm$0.05 \\

        GAT &
        85.76$\pm$0.34 & 97.91$\pm$0.33 &
        72.45$\pm$0.91 & 92.92$\pm$0.27 &
        86.42$\pm$0.16 & 96.35$\pm$0.13 &
        77.85$\pm$0.58 & 97.07$\pm$0.09 \\

        SGC &
        86.91$\pm$0.50 & 97.86$\pm$0.12 &
        74.10$\pm$1.06 & 94.25$\pm$0.11 &
        87.37$\pm$0.10 & 96.53$\pm$0.03 &
        77.06$\pm$0.85 & 96.48$\pm$0.21 \\

        SSGC &
        86.52$\pm$0.99 & 97.69$\pm$0.18 &
        \textbf{74.38$\pm$1.08} & \underline{94.28$\pm$0.16} &
        88.00$\pm$0.14 & 96.70$\pm$0.02 &
        77.54$\pm$0.49 & 96.67$\pm$0.25 \\

        GraphSAGE &
        86.72$\pm$0.40 & 98.16$\pm$0.07 &
        72.61$\pm$0.39 & 93.48$\pm$0.13 &
        88.74$\pm$0.05 & 97.21$\pm$0.05 &
        78.69$\pm$0.42 & 97.36$\pm$0.08 \\

        GATv2 &
        84.55$\pm$0.93 & 97.58$\pm$0.45 &
        72.23$\pm$0.69 & 92.89$\pm$0.13 &
        86.67$\pm$0.20 & 96.39$\pm$0.05 &
        79.19$\pm$0.77 & 97.34$\pm$0.07 \\

        H2GNN &
        87.13$\pm$0.45 & 97.32$\pm$0.01 &
        73.90$\pm$0.01 & 93.35$\pm$0.25 &
        88.53$\pm$0.10 & 95.85$\pm$0.01 &
        59.59$\pm$2.93 & 87.60$\pm$0.94 \\

        SGNN &
        86.27$\pm$0.07 & 91.68$\pm$0.03 &
        74.14$\pm$0.74 & 84.51$\pm$0.37 &
        83.12$\pm$0.19 & 87.27$\pm$0.20 &
        65.47$\pm$2.51 & 79.88$\pm$0.96 \\

        NodeImport-GCN &
        77.32$\pm$0.36 & 95.95$\pm$0.34 &
        61.35$\pm$1.22 & 86.45$\pm$1.38 &
        83.56$\pm$0.24 & 95.15$\pm$0.07 &
        73.05$\pm$0.09 & 95.49$\pm$0.09 \\

        NodeImport-GAT &
        76.15$\pm$0.52 & 94.41$\pm$0.53 &
        58.25$\pm$0.96 & 85.97$\pm$1.45 &
        83.56$\pm$0.32 & 95.16$\pm$0.07 &
        75.37$\pm$0.24 & 95.86$\pm$0.08 \\

        NodeImport-SAGE &
        74.25$\pm$1.10 & 93.89$\pm$0.42 &
        61.66$\pm$0.44 & 88.90$\pm$0.13 &
        86.56$\pm$0.14 & 96.26$\pm$0.04 &
        78.03$\pm$0.25 & 97.15$\pm$0.09 \\

        MotifRGC-GCN &
        86.37$\pm$0.66 & 97.71$\pm$0.26 &
        72.47$\pm$0.31 & 91.99$\pm$0.18 &
        88.41$\pm$0.17 & 96.95$\pm$0.01 &
        OOM & OOM \\

        MotifRGC-GAT &
        86.15$\pm$0.57 & \underline{98.50$\pm$0.11} &
        71.99$\pm$0.80 & 92.41$\pm$0.23 &
        87.37$\pm$0.32 & 96.55$\pm$0.08 &
        OOM & OOM \\

        MotifRGC-SAGE &
        86.26$\pm$0.70 & 98.16$\pm$0.25 &
        71.51$\pm$1.04 & 93.14$\pm$0.46 &
        88.55$\pm$0.18 & 97.25$\pm$0.02 &
        OOM & OOM \\

           MPNN-GCN &
        83.61$\pm$0.34 & 95.85$\pm$0.05 &
        72.79$\pm$0.43 & 93.53$\pm$0.13 &
        89.50$\pm$0.10 & \underline{97.49$\pm$0.01} &
        82.44$\pm$0.27 & \underline{97.49$\pm$0.03} \\

        MPNN-GAT &
        85.08$\pm$0.65 & 97.30$\pm$0.22 &
        72.31$\pm$0.44 & 93.26$\pm$0.12 &
        86.90$\pm$1.08 & 96.54$\pm$0.02 &
        \underline{83.26$\pm$0.13} & 97.40$\pm$0.04 \\

        MPNN-SAGE &
        85.07$\pm$0.12 & 97.51$\pm$0.02 &
        72.09$\pm$0.53 & 93.20$\pm$0.21 &
        \underline{89.88$\pm$0.08} & 97.40$\pm$0.07 &
        83.26$\pm$0.14 & 97.25$\pm$0.08 \\

        GraphECL &
        \underline{87.51$\pm$0.46} & 98.44$\pm$0.00 &
        70.72$\pm$1.46 & 93.25$\pm$0.88 &
        88.05$\pm$0.05 & 96.93$\pm$0.03 &
        79.09$\pm$0.07 & 97.26$\pm$0.02 \\

        IGCL &
        87.29$\pm$0.88 & \textbf{98.53$\pm$0.09} &
        73.80$\pm$0.27 & \textbf{94.65$\pm$0.03} &
        87.08$\pm$0.04 & 96.72$\pm$0.01 &
        80.52$\pm$0.07 & 97.37$\pm$0.01 \\
        \midrule

        OURS &
        \textbf{88.41$\pm$0.35} & 97.65$\pm$0.15 &
        \underline{74.21$\pm$0.05} & 92.86$\pm$0.04 &
        \textbf{90.92$\pm$0.02} & \textbf{97.57$\pm$0.02} &
        \textbf{83.36$\pm$0.01} & \textbf{97.66$\pm$0.09} \\
        \bottomrule
    \end{tabular}
\end{table*}

\begin{table*}[t]
    \caption{F1 ($\uparrow$) and AUC ($\uparrow$) of Link Prediction on Different Datasets. - sign indicates the method does not support the evaluation. OOM denotes out of memory. The 1st (bold) and 2nd (underlined) results are highlighted to indicate their rankings.}
    \label{tab:auc-f1-lp-table}
    \centering
    \setlength{\tabcolsep}{2.8pt}
    \renewcommand{\arraystretch}{.9}
    \begin{tabular}{l cc cc cc cc}
        \toprule
        & \multicolumn{2}{c}{Cora} & \multicolumn{2}{c}{CiteSeer}
        & \multicolumn{2}{c}{Pubmed} & \multicolumn{2}{c}{WikiCS} \\
        \cmidrule(lr){2-3} \cmidrule(lr){4-5} \cmidrule(lr){6-7} \cmidrule(lr){8-9}
        Method & F1 & AUC & F1 & AUC & F1 & AUC & F1 & AUC\\
        \midrule
        KGNN &
        85.58$\pm$0.53 & 91.61$\pm$0.09 &
        88.21$\pm$0.49 & 95.35$\pm$0.31 &
        85.59$\pm$0.14 & 93.56$\pm$0.18 &
        -- & -- \\

        GCN &
        92.58$\pm$0.22 & 96.45$\pm$0.23 &
        95.41$\pm$0.31 & \underline{98.75$\pm$0.05} &
        93.80$\pm$0.22 & 97.87$\pm$0.04 &
        91.92$\pm$0.12 & 97.26$\pm$0.07 \\

        Cheb &
        90.96$\pm$0.32 & 95.00$\pm$0.22 &
        89.82$\pm$0.21 & 95.47$\pm$0.13 &
        87.08$\pm$0.04 & 94.64$\pm$0.04 &
        90.95$\pm$0.13 & 96.73$\pm$0.07 \\

        GAT &
        90.66$\pm$0.18 & 96.54$\pm$0.31 &
        92.42$\pm$0.36 & 97.81$\pm$0.26 &
        89.52$\pm$0.10 & 95.30$\pm$0.06 &
        90.94$\pm$0.10 & 96.52$\pm$0.06 \\

        SGC &
        91.91$\pm$0.18 & 96.59$\pm$0.15 &
        94.43$\pm$0.17 & 98.52$\pm$0.09 &
        86.43$\pm$0.02 & 95.10$\pm$0.03 &
        92.05$\pm$0.14 & \textbf{97.35$\pm$0.01} \\

        SSGC &
        91.72$\pm$0.13 & 96.43$\pm$0.09 &
        94.38$\pm$0.35 & 98.40$\pm$0.17 &
        89.04$\pm$0.16 & 96.05$\pm$0.08 &
        91.63$\pm$0.30 & 97.26$\pm$0.13 \\

        GraphSAGE &
        91.04$\pm$0.19 & 95.79$\pm$0.25 &
        91.96$\pm$0.28 & 96.73$\pm$0.12 &
        88.68$\pm$0.13 & 94.69$\pm$0.07 &
        90.23$\pm$0.22 & 95.79$\pm$0.11 \\

        GATv2 &
        89.77$\pm$0.21 & 95.63$\pm$0.25 &
        93.22$\pm$0.14 & 97.84$\pm$0.08 &
        88.55$\pm$0.15 & 94.71$\pm$0.08 &
        90.19$\pm$0.18 & 96.08$\pm$0.07 \\

        H2GNN &
        92.47$\pm$0.48 & 96.50$\pm$0.12 &
        95.05$\pm$0.24 & 98.30$\pm$0.09 &
        95.79$\pm$0.10 & 98.81$\pm$0.05 &
        \underline{92.12$\pm$0.04} & \underline{97.32$\pm$0.07} \\

        SGNN &
        93.61$\pm$0.54 & 97.60$\pm$0.12 &
        \underline{96.12$\pm$0.37} & 98.23$\pm$0.03 &
        94.72$\pm$0.10 & 98.35$\pm$0.04 &
        91.02$\pm$0.20 & 96.85$\pm$0.08 \\

        NodeImport-GCN &
        87.66$\pm$0.40 & 92.69$\pm$0.32 &
        85.85$\pm$0.23 & 91.46$\pm$0.21 &
        86.03$\pm$0.03 & 92.42$\pm$0.06 &
        89.73$\pm$0.12 & 96.01$\pm$0.16 \\

        NodeImport-GAT &
        87.47$\pm$0.73 & 92.96$\pm$0.13 &
        83.03$\pm$0.27 & 89.99$\pm$0.16 &
        85.66$\pm$0.03 & 91.98$\pm$0.19 &
        89.47$\pm$0.18 & 95.92$\pm$0.04 \\

        NodeImport-SAGE &
        80.29$\pm$1.32 & 87.45$\pm$0.14 &
        77.48$\pm$0.35 & 85.68$\pm$0.17 &
        79.25$\pm$0.15 & 85.93$\pm$0.05 &
        86.43$\pm$0.09 & 93.37$\pm$0.11 \\

        MotifRGC-GCN &
        72.51$\pm$1.51 & 80.14$\pm$0.73 &
        86.85$\pm$1.53 & 92.80$\pm$0.94 &
        93.51$\pm$0.13 & 97.87$\pm$0.04 &
        OOM & OOM \\

        MotifRGC-GAT &
        88.55$\pm$0.28 & 93.50$\pm$0.11 &
        87.66$\pm$2.96 & 92.93$\pm$1.82 &
        \underline{97.64$\pm$0.06} & \underline{99.58$\pm$0.02} &
        OOM & OOM \\

        MotifRGC-SAGE &
        89.03$\pm$0.58 & 94.36$\pm$0.56 &
        91.24$\pm$1.55 & 95.75$\pm$0.65 &
        97.52$\pm$0.05 & 99.44$\pm$0.10 &
        OOM & OOM \\

        MPNN-GCN &
        91.28$\pm$0.32 & 96.46$\pm$0.17 &
        93.96$\pm$0.15 & 98.22$\pm$0.15 &
        86.46$\pm$0.12 & 94.68$\pm$0.03 &
        88.74$\pm$0.09 & 95.83$\pm$0.10 \\

        MPNN-GAT &
        89.82$\pm$0.48 & 94.78$\pm$0.11 &
        90.46$\pm$0.04 & 95.43$\pm$0.28 &
        86.26$\pm$0.14 & 92.61$\pm$0.10 &
        85.84$\pm$0.21 & 92.87$\pm$0.16 \\

        MPNN-SAGE &
        88.45$\pm$0.19 & 92.75$\pm$0.17 &
        90.27$\pm$0.16 & 95.63$\pm$0.08 &
        81.80$\pm$0.06 & 85.32$\pm$0.05 &
        84.72$\pm$0.06 & 91.62$\pm$0.14 \\

        GraphECL &
        92.84$\pm$0.25 & 97.08$\pm$0.09 &
        95.26$\pm$0.24 & 98.68$\pm$0.06 &
        94.19$\pm$0.05 & 98.00$\pm$0.05 &
        11.07$\pm$15.66 & 52.16$\pm$3.06 \\

        IGCL &
        \textbf{95.13$\pm$0.37} & \textbf{99.15$\pm$0.16} &
        95.67$\pm$0.15 & 98.56$\pm$0.21 &
        88.81$\pm$0.09 & 94.48$\pm$0.02 &
        89.84$\pm$0.14 & 96.13$\pm$0.06 \\
        \midrule

        OURS &
        \underline{94.45$\pm$0.03} & \underline{99.03$\pm$0.01} &
        \textbf{96.26$\pm$0.08} & \textbf{99.27$\pm$0.04} &
        \textbf{97.89$\pm$0.01} & \textbf{99.68$\pm$0.02} &
        \textbf{92.27$\pm$0.05} & 97.03$\pm$0.07 \\
        \bottomrule
    \end{tabular}
\end{table*}

\subsection{AUC and F1 value on Homogeneous Datasets}\label{app:AUC and F1 value on Homogeneous Datasets}

The comparison results of node classification on homogeneous datasets are shown in \Cref{tab:auc-f1-nc-table}.
Our method achieves the highest AUC and F1 scores on the majority of datasets, while maintaining stability and scalability, without the sharp performance declines observed in KGNN and SGNN.
Standard deviations are the tightest (less than 0.35), ensuring reproducibility. 
Meanwhile, the comparison results of link prediction on homogeneous datasets are shown in \Cref{tab:auc-f1-lp-table}.
Our proposed method leads or matches every competitor on AUC and F1 value. 
The small performance standard deviations across all datasets of our method also demonstrate its robustness to graph size and domain shift. 

\subsection{Performance on More Challenging Datasets}\label{app:more_datasets}

The evaluation results for node classification and link prediction of BES on more challenging graphs replicate robust gains, shown in \Cref{tab:nc_results} and \Cref{tab:lp_results}.
BES incorporates standard encoders augmented with a classifier, shown in \Cref{tab:enc_cls}. w/o BES represents baseline control runs fundamentally stripping out BES.

As shown in the node classification results, BES consistently achieves top-tier performance on both sets of datasets. These empirical results demonstrate that BES can effectively disentangle structural information in boundary regions, even in complex real-world scenarios where the theoretical prerequisite might be partially relaxed.

\begin{table*}[t]
\centering
\caption{Node classification accuracy ($\uparrow$) on heterogeneous, heterophilic and OGB graphs. OOM indicates out of memory. The 1st (bold) results are highlighted to indicate their rankings.}
\label{tab:nc_results}
\setlength{\tabcolsep}{2.8pt}
\renewcommand{\arraystretch}{.9}
\begin{tabular}{lccccc}
\toprule
Methods & Chameleon & Cornell & Texas & Roman-empire & ogbn-arxiv \\
\midrule
KGNN & 50.55 & \textbf{89.19} & 78.38 & 71.45 & 45.55 \\
GCN & 45.27 & 43.24 & 62.16 & 41.09 & 71.62 \\
Cheb & 43.52 & 81.08 & 78.38 & 74.18 & 71.95 \\
GAT & 49.45 & 51.35 & 64.84 & 42.98 & 71.93 \\
SGC & 40.66 & 45.95 & 59.46 & 42.56 & 67.01 \\
SSGC & 40.64 & 43.24 & 59.46 & 42.89 & 67.20 \\
GraphSAGE & 52.97 & 83.78 & 86.49 & 69.26 & 71.61 \\
GATv2 & 47.69 & 48.65 & 59.46 & 63.79 & 72.25 \\
MotifRGC-GCN & 37.36 & 33.51 & 47.57 & 44.15 & OOM \\
MotifRGC-SAGE & 54.21 & 78.38 & 76.76 & 57.61 & OOM \\
MotifRGC-GAT & 46.29 & 42.65 & 49.19 & 49.41 & OOM \\
MPNN-GCN & 49.67 & 40.54 & 54.05 & 37.61 & 66.22 \\
MPNN-SAGE & 53.41 & 51.35 & 83.78 & 74.42 & 65.28 \\
MPNN-GAT & 47.25 & 55.86 & 81.08 & \textbf{81.87} & 69.17 \\
IGCL & 49.01 & 81.08 & 89.10 & 66.31 & 71.59 \\
\midrule
w/o BES & 53.41 & 81.08 & 83.78 & 80.87 & 73.05 \\
BES(Ours) & \textbf{54.29} & \textbf{89.19} & \textbf{89.19} & 81.47 & \textbf{75.04} \\
\bottomrule
\end{tabular}
\end{table*}

\begin{table}[t]
\centering
\caption{Link prediction accuracy ($\uparrow$) on OGB graphs. OOM indicates out of memory. The 1st (bold) results are highlighted to indicate their rankings.}
\label{tab:lp_results}
\setlength{\tabcolsep}{2.8pt}
\renewcommand{\arraystretch}{.9}
\begin{tabular}{lc}
\toprule
Methods & ogbl-collab \\
\midrule
KGNN & 64.77 \\
GCN & 64.21 \\
Cheb & 64.77 \\
GAT & 64.91 \\
SGC & 62.40 \\
SSGC & 59.99 \\
GraphSAGE & 62.68 \\
GATv2 & 64.30 \\
\midrule
w/o BES & 62.68 \\
BES(Ours) & \textbf{65.84} \\
\bottomrule
\end{tabular}
\end{table}

\begin{table*}[t]
\centering
\caption{Encoders and Classifier Configuration for BES on Different Datasets}
\label{tab:enc_cls}
\setlength{\tabcolsep}{2.8pt}
\renewcommand{\arraystretch}{.9}
\begin{tabular}{lcccccc}
\toprule
 & Chameleon & Cornell & Texas & Roman-empire & ogbn-arxiv & ogbl-collab \\
\midrule
Encs & GraphSAGE+KGNN & Cheb+KGNN & Cheb+GraphSAGE & Cheb+KGNN & Cheb+GATv2 & GraphSAGE \\
Cls & GraphSAGE & Cheb & Cheb & Cheb & Cheb & DOT \\
\bottomrule
\end{tabular}
\end{table*}

\subsection{Embeddings from Different Graph Encoders}\label{app:embeddings_from_different_graph_encoders}
We aim to present the features generated by various graph encoders, as each encoder captures and represents the data from a distinct perspective.
The produced node representation in the embeddings space of nodes from different graph encoders and their combinations is shown in \Cref{fig:exp_encoder}.

We use GCN and GATv2 graph encoders in the experiments.
The embedding visualization results reveal that the raw embeddings produced by individual graph encoders, including GCN, GATv2, and their combinations, exhibit significant category overlap. However, the combined use of multiple graph encoders demonstrates enhanced feature extraction capability, facilitating more effective classification when employing the Cheb classifier \cite{DBLP:conf/nips/DefferrardBV16}, which leverages diverse feature representations derived from different perspectives of the data.

\subsection{Visualization of the Boundary Embedding Shaping}\label{app:more_visualization}
To show the effectiveness of the boundary embedding shaping process, we use binary and multiclass node classification scenarios for the analysis, shown in \Cref{fig:exp_boundary_es}.
The solid line denotes the decision hyperplane projected onto the two-dimensional embedding space, while the dashed lines indicate the corresponding boundary region, from which boundary nodes are conditionally selected to guide the refinement of node embeddings.
From the binary shaping results, we can observe that the boundary regions between node embeddings from two classes are clarified after the boundary embedding shaping process without causing large embedding changes of other nodes. 
The same trend can also be observed for multi-class shaping.

The boundary embedding shaping on Cora and CiteSeer is shown in \Cref{fig:sup_cora} and \Cref{fig:sup_citeseer}.
From the results, we can see that the node embeddings in the boundary regions are also clarified after the boundary embedding shaping workflow, which improves the final classification performance on the corresponding datasets.

\begin{figure*}[t]
    \centering
    \includegraphics[width=\linewidth]{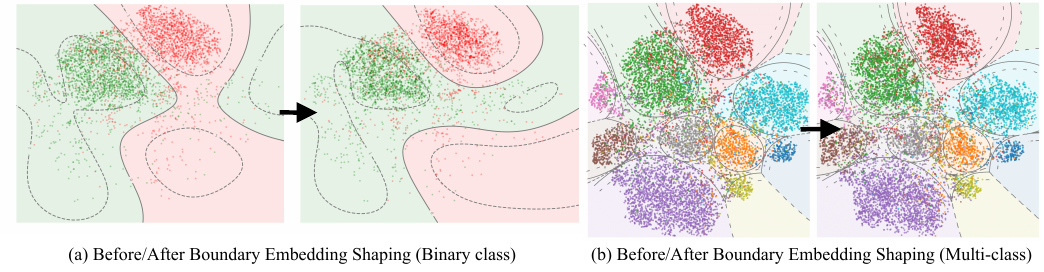}
    \caption{Boundary embedding shaping. (a) is for the binary scenario; (b) is for the multi-class scenario.
    }
    \label{fig:exp_boundary_es}
\end{figure*}

\begin{figure*}[t]
    \centering
    \includegraphics[width=.7\linewidth]{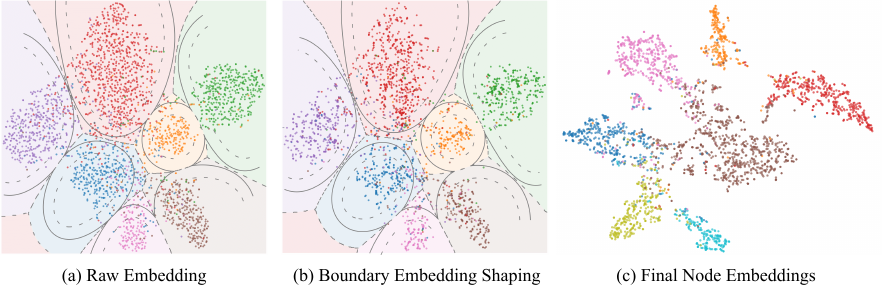}
    \caption{Boundary embedding shaping for Cora.}
    \label{fig:sup_cora}
\end{figure*}

\begin{figure*}[t]
    \centering
    \includegraphics[width=.7\linewidth]{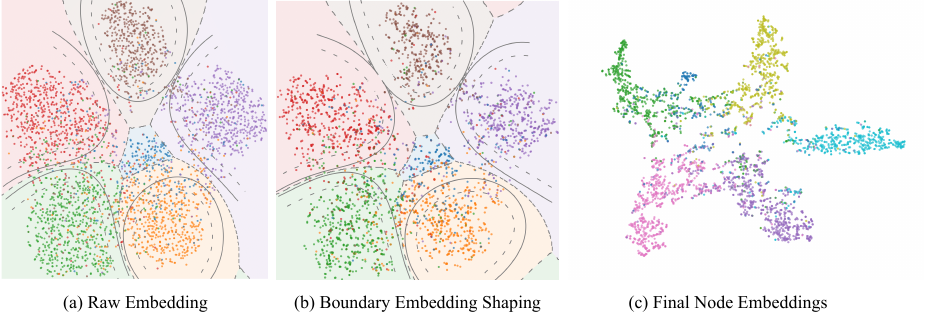}
    \caption{Boundary embedding shaping for CiteSeer.}
    \label{fig:sup_citeseer}
\end{figure*}

\subsection{Ablation Studies}\label{app:ablative_studies}

\begin{table*}[t]
\centering
\caption{Ablation studies of BES for node classification on accuracy ($\uparrow$). The 1st (bold) results are highlighted to indicate their rankings.}
\label{tab:nc_ablation}
\setlength{\tabcolsep}{2.8pt}
\renewcommand{\arraystretch}{.9}
\begin{tabular}{lccccc cccc}
\toprule
Settings & Enc1 & Enc2 & Cls & L1 & L1+2 & Cora & Citeseer & Pubmed & WikiCS \\
\midrule
Enc1   & \checkmark &  & Cheb &  &  & 87.31$\pm$0.76 & 75.74$\pm$0.74 & 89.80$\pm$0.14 & 82.52$\pm$0.32 \\
Enc2   &  & \checkmark & Cheb &  &  & 87.74$\pm$0.44 & 78.10$\pm$0.63 & 89.35$\pm$0.02 & 81.71$\pm$0.69 \\
Enc1+2 & \checkmark & \checkmark & Cheb &  &  & 88.55$\pm$0.10 & 78.00$\pm$0.20 & 90.30$\pm$0.017 & 84.21$\pm$0.25 \\
L1     & \checkmark & \checkmark & Cheb & \checkmark &  & 89.40$\pm$0.17 & 78.20$\pm$0.06 & \textbf{91.03$\pm$0.03} & 85.30$\pm$0.15 \\
L1+2   & \checkmark & \checkmark & Cheb & \checkmark & \checkmark &
\textbf{89.46$\pm$0.15} & \textbf{78.40$\pm$0.07} & 90.92$\pm$0.07 & \textbf{85.52$\pm$0.08} \\
\bottomrule
\end{tabular}
\end{table*}

\begin{table*}[t]
\centering
\caption{Ablation studies of BES for link prediction on accuracy ($\uparrow$). The 1st (bold) results are highlighted to indicate their rankings.}
\label{tab:lp_ablation}
\setlength{\tabcolsep}{2.8pt}
\renewcommand{\arraystretch}{.9}
\begin{tabular}{lcccc cccc}
\toprule
Settings & Enc & Cls & L1 & L1+2 & Cora & Citeseer & Pubmed & WikiCS \\
\midrule
Enc  & \checkmark & Cheb &  &  &
92.98$\pm$0.08 & 95.60$\pm$0.18 & 97.64$\pm$0.06 & 92.00$\pm$0.15 \\

L1   & \checkmark & Cheb & \checkmark &  &
\textbf{94.92$\pm$0.03} & 95.66$\pm$0.06 & 97.80$\pm$0.02 & 92.24$\pm$0.04 \\

L1+2 & \checkmark & Cheb & \checkmark & \checkmark &
94.45$\pm$0.03 & \textbf{96.27$\pm$0.04} & \textbf{97.89$\pm$0.01} & \textbf{92.27$\pm$0.05} \\
\bottomrule
\end{tabular}
\end{table*}

We evaluate the impact of different model components on overall accuracy, taking both node classification and link prediction as examples, shown in \Cref{tab:nc_ablation} and \Cref{tab:lp_ablation}, respectively. 
The results indicate that simply concatenating multiple encoders does not always outperform a single encoder. In contrast, when BES is integrated, the model consistently yields better performance compared to both single and combined encoder settings. We believe this serves as strong evidence that BES effectively enhances performance by optimizing the representations of boundary nodes.

Specifically, these results illustrate an average performance gain of more than 1\% when BES is applied to the boundary region on WikiCS. 
By comparison with the single encoder setting, the performance gain is even more significant, where a 2-3\% performance gain is achieved. 
This trend is consistently observed across other datasets as well, where a 1-2\% performance gain is achieved.
The node classification accuracy achieved in Layer 1+2 surpasses that of Layer 1 on most datasets, such as Cora, Citeseer, and WikiCS. 
A similar trend of accuracy enhancement is observed in the link prediction task, where the combined Layer 1+2 consistently outperforms Layer 1 on most datasets.
Furthermore, the variance in performance is reduced across all datasets following the application of boundary embedding shaping.
That means both layers learn the local node features while preserving the relevant topological information.
In addition, the joint application of multiple loss functions during different layers of training not only mitigates the limitations of individual end-to-end loss functions but also enhances the model's discriminative power through cooperative optimization mechanisms, thereby confirming its effectiveness in handling complex graph data.

The boundary node is crucial, and we also provide an ablation study with and without the boundary node to compute accuracy across different datasets in \cref{tab:node_boundary_ablation}.
Results confirm consistent gains with boundary nodes.

\begin{table*}[t]
\centering
\caption{Accuracy ($\uparrow$) for node classification with \textbf{w/} and \textbf{w/o} boundary nodes. The 1st (bold) results are highlighted to indicate their rankings.}
\label{tab:node_boundary_ablation}
\setlength{\tabcolsep}{2.8pt}
\renewcommand{\arraystretch}{.9}
\begin{tabular}{lccccccccc}
\toprule
Settings & Cora & Citeseer & Pubmed & WikiCS & Chameleon & Cornell & Texas & Roman-empire & ogbn-arxiv \\
\midrule
w/o boundary nodes & 90.02 & 78.10 & 90.59 & 83.97 & 53.85 & 81.08 & \textbf{89.19} & 80.96 & 72.54 \\
w/ boundary nodes  & \textbf{90.39} & \textbf{78.40} & \textbf{91.03} & \textbf{85.52} & \textbf{54.29} & \textbf{81.19} & \textbf{89.19} & \textbf{81.47} & \textbf{75.04} \\
\bottomrule
\end{tabular}
\end{table*}

\subsection{Convergence and Efficiency Analysis}\label{app:efficiency}
We evaluate the convergence and efficiency of the proposed method on different datasets during the training process for node classification and link prediction, shown in \Cref{fig:sup-nc-efficiency} and \Cref{fig:sup-lp-efficiency}, respectively.
The node classification results indicate that, on the Cora and Citeseer datasets, the proposed method achieves convergence within 100 seconds for both Layer 1 and the combined Layer 1+2. Meanwhile, on the PubMed and WikiCS datasets, convergence is attained within 1,000 seconds for the corresponding layers.
Meanwhile, the results of the link prediction on the Cora and Citeseer datasets show that the proposed method reaches convergence within 200 seconds for different layers. Subsequently, on the WikiCS dataset, convergence is also attained around 400 seconds. 
On the PubMed dataset, the training exhibits initial fluctuations, with convergence observed after approximately 1,500 seconds.

\begin{figure*}[t]
    \centering
    \includegraphics[width=\linewidth]{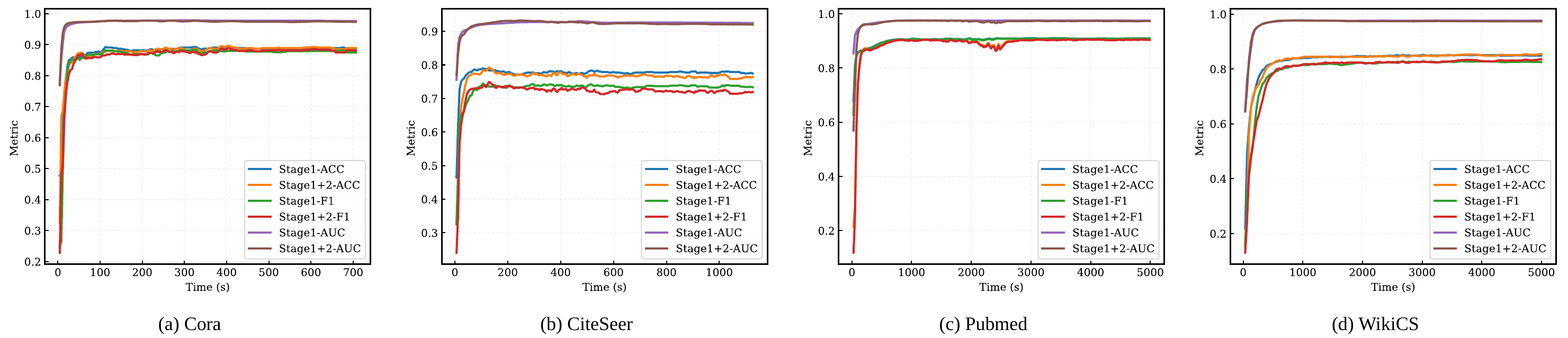}
    \caption{Metric convergence of node classification on different datasets during training.}
    \label{fig:sup-nc-efficiency}
\end{figure*}

\begin{figure*}[t]
    \centering
    \includegraphics[width=\linewidth]{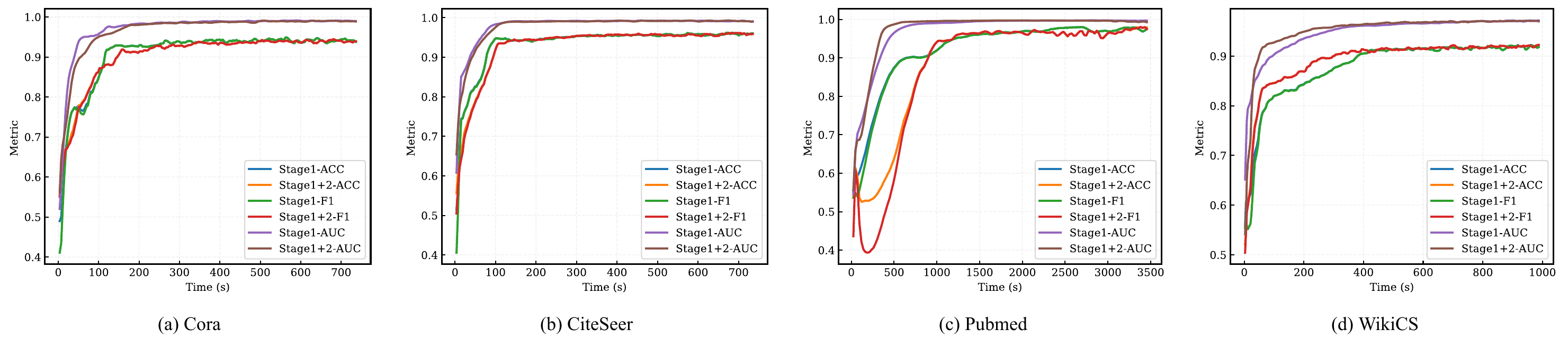}
    \caption{Metric convergence of link prediction on different datasets during training.}
    \label{fig:sup-lp-efficiency}
\end{figure*}

\end{document}